\newcommand{\BibTeX}{\textsc{B\kern-0.1emi\kern-0.017emb}\kern-0.15em\TeX}
\newcommand{\racircle}{\mathrel{\leftarrow\!\!\!\circ}}
\newcommand{\lacircle}{\mathrel{\circ\!\!\!\rightarrow}}
\newcommand{\rcircle}{\mathrel{-\!\circ}}
\newcommand{\lcircle}{\mathrel{\circ\!-}}
\begin{document}

\title{A Decomposition-Based Algorithm for Learning the Structure of Multivariate Regression Chain Graphs}
\author{\Name{Mohammad Ali Javidian} \Email{javidian@email.sc.edu}\and
	\Name{Marco Valtorta} \Email{mgv@cse.sc.edu}\\
	\addr Department of Computer Science \& Engineering, University of South Carolina, Columbia, SC, 29201, USA.}

\maketitle

\begin{abstract}We extend the decomposition approach for learning Bayesian networks (BN) proposed by \citep{xie} to learning multivariate regression chain graphs (MVR CGs), which include BNs as a special case. The same advantages of this decomposition approach hold in the more general setting: reduced complexity and increased power of computational independence tests. Moreover, latent (hidden) variables can be represented in MVR CGs by using bidirected edges, and our algorithm correctly recovers any independence structure that is faithful to  an MVR CG, thus greatly extending the range of applications of decomposition-based model selection techniques. Simulations under a variety of settings demonstrate the competitive
performance of our method in comparison with the PC-like algorithm \citep{sp}. In fact, the decomposition-based  algorithm usually outperforms the PC-like algorithm except in running time. The performance of both algorithms is much better when the underlying graph is sparse.
\end{abstract}
\begin{keywords}MVR chain graph, conditional independence, decomposition, \textit{m}-separator, junction tree, augmented graph, triangulation, graphical model, Markov equivalent, structural learning.
\end{keywords}
\section{Introduction}
\begin{figure}[ht]
    \centering
    \fbox{
    \includegraphics[width=.7\linewidth]{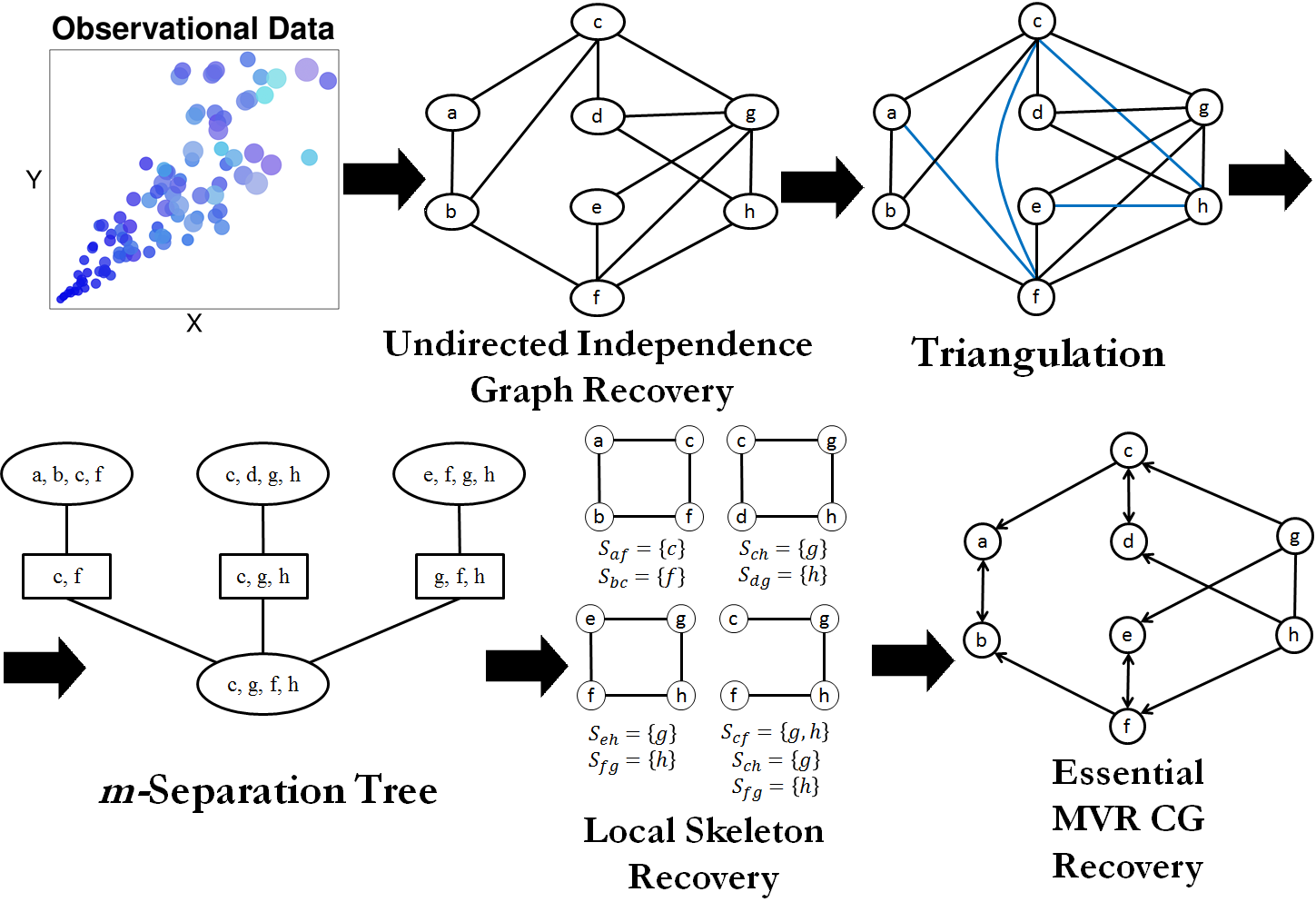}}
    \caption{\small{The procedure for learning the structure of an essential MVR CG from a faithful distribution.}}
    \label{fig:my_label}
\end{figure}
Probabilistic graphical models (PGMs) use graphs, either undirected, directed, bidirected, or mixed, to represent possible dependencies among the variables of a multivariate probability distribution. Two types of graphical representations of distributions are commonly used, namely, Bayesian networks (BNs) and Markov random fields (Markov networks (MNs)), whose graphical parts are, respectively, a directed acyclic graph (DAG) and an undirected graph. Both families encompass the properties of factorization and independencies, but they differ in the set of independencies they can encode and the factorization of the distribution that they induce.

Currently systems containing both causal and non-causal relationships are mostly modeled with directed acyclic graphs (DAGs). An alternative approach is using chain graphs (CGs). Chain graphs may
have both directed and undirected edges under the constraint that there do not exist any semi-directed cycles \citep{d}. So, CGs may contain two types of edges,
the directed type that corresponds to the causal relationship in DAGs and a
second type of edge representing a symmetric relationship \citep{s2}. In
particular, $X_1$ is a direct cause of $X_2$ only if $X_1\to X_2$ (i.e., $X_1$ is a parent
of $X_2$), and $X_1$ is a (possibly indirect) cause of $X_2$ only if there is a directed
path from $X_1$ to $X_2$ (i.e., $X_1$ is an ancestor of $X_2$). So, while the interpretation of the directed edge in a CG is quite clear,
the second type of edge can represent different types of relations and, depending on how we interpret it in the graph, we say that we have different CG interpretations with different separation criteria, i.e. different ways of reading conditional independencies from the graph, and different intuitive meaning behind
their edges. The three following interpretations are the best known in the literature. The first interpretation (LWF) was introduced by Lauritzen,
Wermuth and Frydenberg \citep{lw, f} to combine DAGs and undirected graphs (UGs). The second
interpretation (AMP), was introduced by Andersson, Madigan and Perlman, and also combines DAGs and UGs but with a separation criterion
that more closely resembles the one of DAGs \citep{amp}. The third interpretation,
the multivariate regression interpretation (MVR), was introduced by Cox
and Wermuth \citep{cw1, cw2} to combine DAGs and bidirected (covariance) graphs. 

Unlike in the other CG interpretations, the bidirected edge in MVR CGs has
a strong intuitive meaning. It can be seen to represent one or more hidden
common causes between the variables connected by it. In other words, in an MVR CG any bidirected
edge $X\leftrightarrow Y$ can be replaced by $X\gets H\to Y$ to obtain a Bayesian network representing
the same independence model over the original variables, i.e. excluding the
new variables H. These variables are called hidden, or latent, and have been
marginalized away in the CG model \citep{s}. See \citep{jv1} for details on the properties of MVR chain graphs.

Latent variables, which are often present in practice, cause several complications. First, causal inference based on structural learning (model selection) algorithms such as the PC algorithm \citep{sgs} may be incorrect. Second, if a distribution is faithful\footnote{A distribution $P$ is faithful to DAG $G$ if  any independency in $P$ implies a corresponding $d$-separation property in $G$ \citep{sgs}.} to a DAG, then the distribution obtained by marginalizing on some of the variables may not be faithful to any DAG on the observed variables, i.e., the space of DAGs is not closed under marginalization \citep{cmkr}.  
These problems can be solved by exploiting MVR chain graphs. An example of a situation for which  CG is useful is if we have
a system containing two genes and two diseases caused by these such that Gene1
is the cause of Disease1, Gene2 is the cause of Disease2, and the diseases are correlated. In this case we might suspect the presence of an
unknown factor inducing the correlation between Disease1 and Disease2, such as
being exposed to a stressful environment. Having such a hidden variable results in the
independence model described in the information above. The MVR CG representing the information
above is shown in Figure \ref{Fig:gene} (a)  while the best (inclusion optimal) BN and MN are shown in
Figure \ref{Fig:gene} (b)  and (c), respectively. We can now see that it is only the MVR CG that
describes the relations in the system correctly \citep{Sonntag2015}. 
\begin{figure}[ht]
	\centering
	\includegraphics[scale=.4]{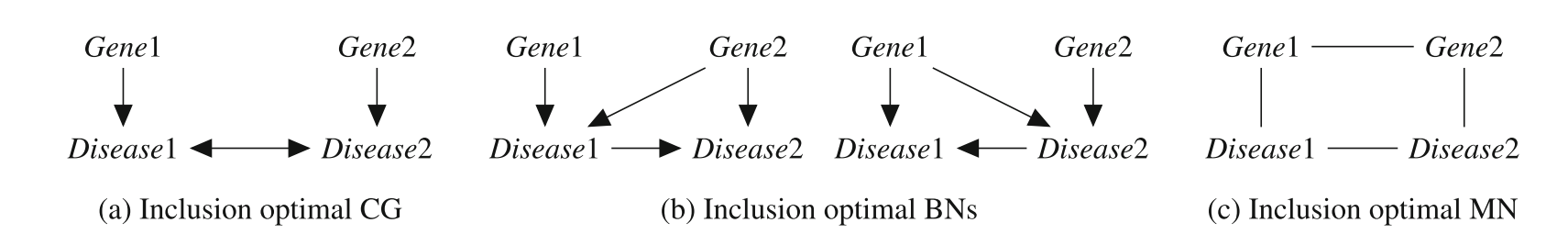}
	\caption{A gene and disease example with MVR CG representation, BN representation and MN
		representation \citep{Sonntag2015}.} \label{Fig:gene}
\end{figure} 

As a result, designing efficient algorithms for learning the structure of MVR chain graphs is an important and desirable task.

Sonntag lists four constraint-based learning algorithms for CGs. All are based on testing if variables
are (conditionally) independent in the data using an independence test, and using this information to deduce the structure of the optimal graph. These algorithms are the PC-like algorithms
\citep{srs, p1, sp}, the answer set programming (ASP) algorithms \citep{p3, sjph}, the LCD algorithm  \citep{mxg} and the
CKES algorithm \citep{psn}. The former two have implementations for all three
CG interpretations, while the latter two are only applicable for LWF CGs \citep{s2}.

In this paper, we propose a decomposition approach for recovering structures of MVR CGs. Our algorithms are natural extensions of algorithms in \citep{xie}. In particular, the
rule in \citep{xie} for combining local structures into a global skeleton is still applicable
and no more careful work (unlike, for example,  algorithms in \citep{mxg}) must be done to ensure a valid combination. Moreover, the method for
extending a global skeleton to a Markov equivalence class is exactly the same as that for Bayesian networks.
The paper is organized as follows:  Section \ref{defs&concepts} gives notation and definitions. In Section 3, we show a condition for decomposing structural learning
of MVR CGs. Construction of $m$-separation trees to be used for decomposition is discussed in Section \ref{construct-trees}. We propose the
main algorithm and then give an example in Section \ref{main-alg} to illustrate our approach for recovering the global structure
of an MVR CG. Section \ref{complexity} discusses the complexity and advantages of the proposed algorithms. Section \ref{evaluation} describes our evaluation setup. Both Gaussian and discrete networks were used.  A comparison with the PC-like algorithm of \citep{sp} was carried out.  Both quality of the recovered networks and running time are reported. Finally, we conclude with
some discussion in Section \ref{discussion}. The proofs of our main results and the correctness of the algorithms are given in Appendices A and B.    


\section{Definitions and Concepts}\label{defs&concepts}
In this paper we consider graphs containing both directed ($\to$) and bidirected ($\leftrightarrow$) edges and largely use the terminology of \citep{xie, r2}, where the reader can also find further details. Below we briefly list some of the most central concepts used in this paper.

	If there is an arrow from $a$ pointing towards $b$, $a$ is said to be a parent 
	of $b$. The set of parents of $b$ is denoted as $pa(b)$. If there is a bidirected edge between $a$ and $b$, $a$ and $b$ are said to be neighbors. The set of neighbors of a vertex $a$ is denoted as $ne(a)$. The expressions $pa(A)$ and $ne(A)$ denote the collection of  
	parents and neighbors of vertices in $A$ that are not themselves 
	elements of $A$. The boundary $bd(A)$ of a subset $A$ of vertices is the set of vertices in $V\setminus A$ that are parents or neighbors to vertices in $A$.

	A path of length $n$ from $a$ to $b$ is a sequence $a=a_0,\dots , a_n=b$ of 
	distinct vertices such that $(a_i\to a_{i+1})\in E$, for all $i=1,\dots ,n$. A chain of length $n$ from $a$ to $b$ is a sequence $a=a_0,\dots , a_n=b$ of 
	distinct vertices such that $(a_i\to a_{i+1})\in E$, or $(a_{i+1}\to a_i)\in E$, or $(a_{i+1}\leftrightarrow a_i)\in E$, for all $i=1,\dots ,n$. We say that $u$ is an ancestor of $v$ and $v$
	is a descendant of $u$ if there is a path from $u$ to $v$ in $G$.
	The set of ancestors of $v$ is denoted as $an(v)$, and we define $An(v) = an(v)\cup v$. We apply this definition to sets: $an(X) = \{\alpha | \alpha \textrm{ is an ancestor of } \beta \textrm{ for some } \beta \in X\}$.
	A partially directed cycle in a graph $G$ is a sequence of $n$ distinct vertices $v_1,\dots, v_n (n\ge 3)$,
	and $v_{n+1}\equiv v_1$, such that 
	\begin{itemize}
		\item $\forall i (1\le i\le n)$ either $v_i\leftrightarrow v_{i+1}$ or $v_i\to v_{i+1}$, and
		\item $\exists j (1\le j\le n)$ such that $v_i\to v_{i+1}$.
	\end{itemize}
	A graph with only undirected edges is called an undirected graph (UG). A graph with only
	directed edges and without directed cycles is called a directed acyclic graph (DAG). Acyclic directed mixed graphs, also known as semi-Markov(ian) \citep{pj}
	models contain directed ($\rightarrow$) and bidirected
	($\leftrightarrow$) edges subject to the restriction that there are no directed cycles \citep{r2,er}. A graph that has no partially directed cycles is called \textit{chain graph}.

	A nonendpoint vertex $\zeta$ on a chain is a \emph{collider} on the chain if the edges preceding and succeeding $\zeta$ on the chain have an arrowhead at $\zeta$, that is, $\to \zeta \gets, or \leftrightarrow \zeta \leftrightarrow, or\leftrightarrow \zeta \gets, or\to \zeta \leftrightarrow$. A nonendpoint vertex $\zeta$ on a chain which is not a collider is a noncollider on the chain. A chain between vertices $\alpha$ and $\beta$ in  chain graph $G$ is said to be $m$-connecting given a set $Z$ (possibly empty), with $\alpha, \beta \notin Z$, if: 
	\begin{enumerate}
		\item[(i)] every noncollider on the path is not in $Z$, and 
		\item[(ii)] every collider on the path is in $An_G(Z)$.
	\end{enumerate}
	A chain that is not $m$-connecting given $Z$ is said to be blocked given (or by) $Z$.
	If there is no chain $m$-connecting $\alpha$ and $\beta$ given $Z$, then $\alpha$ and $\beta$ are said to be \emph{m-separated} given $Z$. Sets $X$ and $Y$ are $m$-separated given $Z$, if for every pair $\alpha, \beta$, with $\alpha\in X$ and $\beta \in Y$, $\alpha$ and $\beta$ are $m$-separated given $Z$ ($X$, $Y$, and $Z$ are disjoint sets; $X, Y$ are nonempty). We denote the independence model resulting from applying the $m$-separation criterion to $G$, by $\Im_m$(G). This is an extension of Pearl's $d$-separation criterion \citep{pearl1} to MVR chain graphs in that in a DAG $D$, a chain is $d$-connecting if and only if it is $m$-connecting.

	Two vertices $x$ and $y$ in  chain graph $G$ are said to be collider connected if there is a chain from $x$ to $y$ in $G$ on which every non-endpoint vertex is a collider; such a chain is called a collider chain. Note that a single edge trivially forms a collider chain (path), so if $x$ and $y$ are adjacent in a chain graph then they are collider connected. The augmented graph derived from $G$, denoted $(G)^a$, is an undirected graph with the same vertex set as $G$ such that $$c\--d \textrm{ in } (G)^a \Leftrightarrow c \textrm{ and } d \textrm{ are collider connected in } G.$$

	Disjoint sets $X, Y\ne \emptyset,$ and $Z$ ($Z$ may be empty) are said to be
	$m^\ast$-separated if $X$ and $Y$ are separated by $Z$ in $(G_{an(X\cup Y\cup Z)})^a$. Otherwise $X$ and $Y$ are said to be $m^\ast$-connected
	given $Z$. The resulting independence model is denoted by $\Im_{m^\ast}(G)$.

According to \citep[Theorem 3.18.]{rs} and \citep{jv1}, for  chain graph $G$ we have: $\Im_m(G)=\Im_{m^\ast}(G)$.

Let $\bar{G}_V = (V, \bar{E}_V)$ denote an undirected graph where $\bar{E}_V$ is a set of undirected edges. An undirected edge between
two vertices $u$ and $v$ is denoted by $(u, v)$. For a subset $A$ of $V$, let $\bar{G}_A= (A, \bar{E}_A)$ be the subgraph induced by $A$
and $\bar{E}_A = \{e\in \bar{E}_V | e\in A\times A\} = \bar{E}_V\cap (A\times A)$. An undirected graph is called complete if any pair of vertices is connected by an edge. For an undirected graph, we say that vertices $u$ and $v$ are separated by a set of vertices $Z$ if each path between $u$ and $v$ passes through $Z$. We say that two distinct vertex sets $X$ and $Y$ are separated by $Z$ if and
only if $Z$ separates every pair of vertices $u$ and $v$ for any $u\in X$ and $v\in Y$. We say that an undirected graph $\bar{G}_V$ is
an undirected independence graph (UIG) for  CG $G$ if the fact that a set $Z$ separates $X$ and $Y$ in $\bar{G}_V$ implies that $Z$
$m$-separates $X$ and $Y$ in $G$. Note that the augmented graph derived from  CG $G$, $(G)^a$, is an undirected independence graph for $G$.  We say that $\bar{G}_V$ can be decomposed into subgraphs $\bar{G}_A$ and $\bar{G}_B$ if
\begin{itemize}
	\item[(1)] $A\cup B=V$, and
	\item[(2)] $C=A\cap B$ separates $V\setminus A$ and $V\setminus B$ in $\bar{G}_V$.
\end{itemize}
The above decomposition does not require that the separator $C$ be complete, which is required for weak decomposition defined in \citep{l}. In the next section, we show that a
problem of structural learning of  CG can also be decomposed into problems for its decomposed subgraphs even if
the separator is not complete.

A triangulated (chordal) graph is an undirected graph in which all cycles of four or more vertices have a chord, which is an edge that is not part of the cycle but connects two vertices of the cycle  (see, for example, Figure \ref{Fig:mvr1}). For an
undirected graph $\bar{G}_V$ which is not triangulated, we can add extra (``fill-in") edges to it such that it becomes to be a triangulated
graph, denoted by $\bar{G}_V^t$.

\begin{figure}[ht]
	\centering
	\includegraphics[scale=.4]{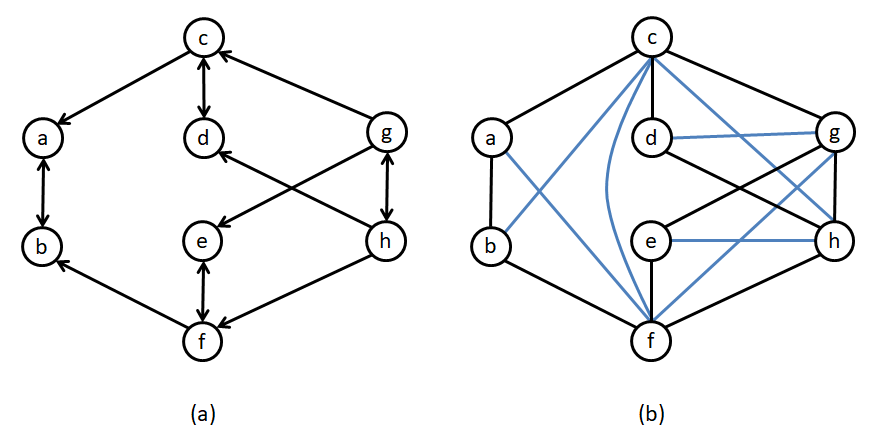}
	\caption{(a) An MVR CG $G$. (b) The augmented graph $G^a$, which is also a triangulated graph $G^t$.} \label{Fig:mvr1}
\end{figure}

Let $X\!\perp\!\!\!\perp Y$ denote
the independence of $X$ and $Y$, and $X\!\perp\!\!\!\perp Y|Z$ (or $\langle X,Y | Z\rangle$) the conditional independence of $X$ and $Y$ given $Z$. In this paper, we assume that all independencies of a
probability distribution of variables in $V$ can be checked by $m$-separations of $G$, called the faithfulness assumption \citep{sgs}. The faithfulness assumption means that all independencies and conditional independencies among variables can be represented by $G$. 

The global skeleton is an undirected graph obtained by dropping direction of  CG. Note that the absence of an
edge $(u, v)$ implies that there is a variable subset $S$ of $V$ such that $u$ and $v$ are independent conditional on $S$, that
is, $u\!\perp\!\!\!\perp v|S$ for some $S\subseteq V\setminus \{u,v\}$ \citep{jv1}. Two MVR CGs over the same variable set are called Markov equivalent if they
induce the same conditional independence restrictions. Two MVR CGs are Markov equivalent if and only if they have the
same global skeleton and the same set of $v$-structures (unshielded colliders) \citep{ws}. An equivalence class of MVR CGs consists of all MVR CGs which
are Markov equivalent, and it is represented as a partially directed graph (i.e., a graph containing directed, undirected, and bidirected edges and no directed cycles) where the directed/bidirected edges represent edges that are common to every MVR CG in it, while the undirected edges represent that any legal orientation of them leads
to a Markov equivalent MVR CG. Therefore the goal of structural learning is to construct a partially directed graph to
represent the equivalence class. A local skeleton for a subset $A$ of variables is an undirected subgraph for $A$ in which
the absence of an edge $(u, v)$ implies that there is a subset $S$ of $A$ such that $u\!\perp\!\!\!\perp v|S$.

Now, we introduce the notion of $m$-separation trees, which is used to facilitate the representation of the decomposition. The concept is similar to the junction tree of cliques and the
independence tree introduced for DAGs as $d$-separation trees in \citep{xie}. Let $C = \{C_1, \dots, C_H \}$ be a collection of distinct variable sets such that for $h = 1,\dots ,H, C_h\subseteq V$.
Let $T$ be a tree where each node corresponds to a distinct variable set in $C$, to be displayed as an oval (see, for example, Figure \ref{Fig:tree1}). The term ‘node’ is used for an $m$-separation tree to distinguish from
the term ‘vertex’ for a graph in general. An undirected edge $e = (C_i,C_j)$ connecting nodes $C_i$ and $C_j$ in $T$ is labeled with a separator $S = C_i\cap C_j$, which is displayed as a rectangle.
Removing an edge $e$ or, equivalently, removing a separator $S$ from $T$ splits $T$ into two subtrees
$T_1$ and $T_2$ with node sets $C_1$ and $C_2$ respectively. We use $V_i$ to denote the union of the
vertices contained in the nodes of the subtree $T_i$ for $i = 1,2$.
 
\begin{definition}\label{septree}
	A tree $T$ with node set $C$ is said to be an $m$-separation tree for  chain graph $G = (V,E)$ if
	\begin{itemize}
		\item $\cup_{C_i\in C}C_i=V$, and
		\item for any separator $S$ in $T$ with $V_1$ and $V_2$ defined as above by removing $S$, we have $\langle V_1\setminus S,V_2\setminus S | S\rangle_G$.
	\end{itemize}
\end{definition}
\begin{figure}[ht]
	\centering
	\includegraphics[scale=.4]{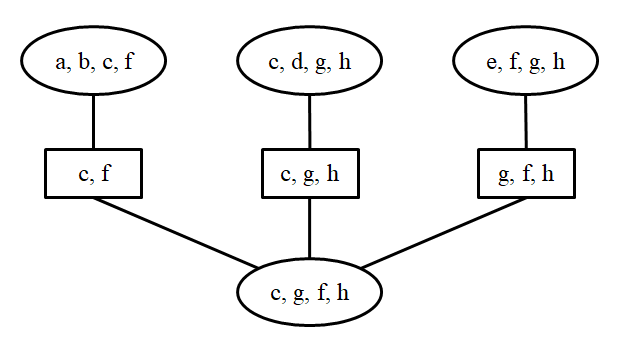}
	\caption{an $m$-separation tree.} \label{Fig:tree1}
\end{figure}

Notice that a separator is defined in terms of a tree whose nodes consist of variable sets, while
the $m$-separator is defined based on  chain graph. In general, these two concepts are not related, though for an $m$-separation tree its separator must be some corresponding $m$-separator in the underlying MVR chain graph. The definition of $m$-separation trees for MVR chain graphs is similar to that of junction trees of cliques,
see \citep{cdls,l}. Actually, it is not difficult to see that a junction tree
of  chain graph $G$ is also an $m$-separation tree. However, as in \citep{mxg}, we point out two differences here: (a) an $m$-separation tree is defined with $m$-separation and it does not require that every node be a clique or
that every separator be complete on the augmented graph; (b) junction trees are mostly used as inference engines, while our interest in $m$-separation trees is mainly derived from their power in facilitating the decomposition of structural learning.

A collection of variable sets $C = \{C_1, \dots, C_H \}$ is said to be a hypergraph on $V$ where each hyperedge $C_h$ is
a nonempty subset of variables, and $\cup_{h=1}^HC_h=V$. A hypergraph is a reduced hypergraph if $C_i\not\subseteq C_j$ for $i\ne j$. In this paper, only reduced hypergraphs are used, and thus simply called hypergraphs. 


\section{Construction of \textit{m}-Separation Trees}\label{construct-trees}

As proposed in \citep{xie}, one can construct a $d$-separation tree from observed data, from
domain or prior knowledge of conditional independence relations or from a collection of databases.
However, their arguments are not valid for constructing an $m$-separation tree from domain knowledge or from observed data patterns when latent common causes are present, as in the current setting. In this section, we first extend
Theorem 2 of \citep{xie}, which guarantees that their method for constructing a separation tree
from data is valid for MVR chain graphs. Then we investigate sufficient conditions for constructing $m$-separation trees from domain or prior knowledge of conditional independence relations or from a collection of databases. 


\subsection{Constructing an \textit{m}-Separation Tree from Observed Data}
In several algorithms for structural learning of PGMs, the first step is to construct an undirected independence graph in which
the absence of an edge $(u, v)$ implies $u \perp\!\!\!\perp v | V\setminus\{u,v\}$. To construct such an undirected graph, we can start with a complete undirected graph, and then for each pair of variables $u$ and $v$, an undirected edge $(u, v)$ is removed if $u$ and
$v$ are independent conditional on the set of all other variables \citep{xie}. For normally distributed data, the undirected independence graph can be efficiently constructed by removing an edge $(u, v)$ if and only if the corresponding entry in the concentration matrix (inverse covariance matrix) is zero \citep[Proposition 5.2]{l}. For this purpose, performing a conditional independence test for each pair of random variables using the
partial correlation coefficient can be used. If the $p$-value of the test is smaller than the given threshold, then there will be an edge on the output graph. For discrete data, a test of conditional independence given a large number of discrete variables may
be of extremely low power. To cope with such difficulty, a local discovery algorithm called
Max-Min Parents and Children (MMPC) \citep{tas} or  the forward selection procedure described in \citep{ed} can be applied.

An $m$-separation tree can be built by constructing a junction tree from an undirected independence graph. In fact, we generalize Theorem 2 of \citep{xie} as follows.
\begin{theorem}\label{thm2}
	A junction tree constructed from an undirected independence graph for MVR CG $G$ is an $m$-separation tree for $G$.
\end{theorem}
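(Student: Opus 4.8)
The plan is to verify the two defining conditions of an $m$-separation tree (Definition \ref{septree}) directly, reducing everything to a single standard structural property of junction trees combined with the defining property of an undirected independence graph. Write $\bar{G}_V$ for the given undirected independence graph, $\bar{G}_V^t$ for the triangulation of $\bar{G}_V$ from which the junction tree $T$ is built, and let $C = \{C_1,\dots,C_H\}$ be its node set, i.e.\ the cliques of $\bar{G}_V^t$. The covering condition $\cup_{C_i\in C}C_i = V$ is immediate: the nodes of $T$ are exactly the cliques of $\bar{G}_V^t$, and since every vertex lies in at least one clique, their union is all of $V$.

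The substance lies in the second condition. I would fix an arbitrary separator $S$ obtained by removing an edge $(C_i,C_j)$ of $T$, splitting $T$ into subtrees $T_1,T_2$ with vertex unions $V_1,V_2$. The key step is the standard junction-tree separation property, a consequence of the running intersection property: one has $S = V_1\cap V_2$, and $S$ separates $V_1\setminus S$ from $V_2\setminus S$ in the triangulated graph $\bar{G}_V^t$. Because $\bar{G}_V$ arises from $\bar{G}_V^t$ by deleting the fill-in edges, every path of $\bar{G}_V$ is also a path of $\bar{G}_V^t$; hence separation in the denser graph $\bar{G}_V^t$ entails separation in the sparser graph $\bar{G}_V$, so $S$ separates $V_1\setminus S$ from $V_2\setminus S$ in $\bar{G}_V$ as well. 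Finally, invoking the definition of an undirected independence graph (separation in $\bar{G}_V$ implies $m$-separation in $G$), I conclude $\langle V_1\setminus S, V_2\setminus S \mid S\rangle_G$, which is exactly what Definition \ref{septree} requires.

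The point that makes this a genuine generalization of Theorem 2 of \citep{xie}, rather than a verbatim restatement, is isolating where the MVR chain graph structure actually enters. All of the graph-theoretic manipulation occurs on the undirected graphs $\bar{G}_V$ and $\bar{G}_V^t$ and is formally identical to the DAG case; the only $G$-dependent step is the closing translation of undirected separation into an $m$-separation statement. That translation is handled entirely by the definition of a UIG, whose existence for an MVR CG is guaranteed by the augmented graph $(G)^a$ (noted above to be a UIG, via the equivalence $\Im_m(G)=\Im_{m^\ast}(G)$). Consequently the proof needs no new separation-theoretic lemma about bidirected edges or colliders; the main obstacle is really just to confirm that the notion of UIG has been set up correctly so that undirected separation carries over to $m$-separation, a role already discharged by the $m^\ast$-separation equivalence cited earlier. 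I therefore expect the argument to be short, with the junction-tree separation property doing the real work.
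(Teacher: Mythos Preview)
Your proposal is correct and follows essentially the same route as the paper: invoke the standard junction-tree separation property in the triangulated graph $\bar{G}_V^t$, pass to the sparser graph $\bar{G}_V$ by edge inclusion, and then apply the definition of a UIG to obtain $m$-separation in $G$. The only (harmless) difference is that you explicitly verify the covering condition $\cup_{C_i\in C}C_i=V$ and add commentary on where the MVR-specific step enters, whereas the paper leaves the former implicit.
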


An $m$-separation tree $T$ only requires that all $m$-separation properties of $T$ also hold for MVR CG $G$, but the reverse is
not required. Thus we only need to construct an undirected independence graph that may have fewer conditional
independencies than the moral graph, and this means that the undirected independence graph may have extra edges
added to the augmented graph. As \citep{xie} observe for $d$-separation in DAGs, if all nodes of an $m$-separation tree contain only a few variables, ``the null hypothesis of the absence of an undirected edge may be tested statistically at
a larger significance level."

Since there are standard algorithms for constructing junction trees from UIGs \citep[Chapter 4, Section 4]{cdls}, the construction of separation trees reduces to the construction of
UIGs. In this sense, Theorem \ref{thm2} enables us to exploit various techniques for learning UIGs to serve
our purpose. More suggested methods for learning UIGs from data, in addition to the above mentioned techniques, can be found in \citep{mxg}. 

\begin{example}
To construct an $m$-separation tree for MVR CG $G$ in Figure \ref{Fig:mvr1}(a), at first an undirected independence graph
is constructed by starting with a complete graph and removing an edge $(u, v)$ if $u \perp\!\!\!\perp v | V\setminus\{u,v\}$. An undirected graph
obtained in this way is the augmented graph of MVR CG $G$. In fact, we only need to construct an undirected independence
graph which may have extra edges added to the augmented graph. Next triangulate the undirected graph and finally obtain
the $m$-separation tree, as shown in Figure \ref{Fig:mvr1}(b) and Figure \ref{Fig:tree1} respectively.
\end{example}

\subsection{Constructing an \textit{m}-Separation Tree from Domain Knowledge or from Observed Data Patterns}\label{subsec2} 
Algorithm 2 of \citep{xie} proposes an algorithm for constructing a $d$-separation tree $T$ from domain knowledge or from observed
data patterns such that a correct skeleton can be constructed by combining subgraphs for nodes of $T$. In this subsection, we propose an approach for constructing an $m$-separation tree from domain knowledge or from
observed data patterns without conditional independence tests. Domain knowledge of variable dependencies can be represented as a collection of variable
sets $C = \{C_1,\dots , C_H \}$, in which variables contained in the same set may associate with each other directly but variables
contained in different sets associate with each other through other variables. This means that two variables that are not
contained in the same set are independent conditionally on all other variables. On the other hand, in an application study, observed data may have a collection of different observed patterns, $C = \{C_1,\dots , C_H \}$, where $C_h$ is the set of observed variables for the $h$th group of individuals. In both cases, the condition to make our algorithms correct for structural learning from a
collection $C$ is that $C$ must contain sufficient data such that parameters of the underlying MVR CG are estimable. 

For a DAG, parameters are estimable if, for each variable $u$, there is an observed data pattern $C_h$ in $C$ that contains
both $u$ and its parent set. Thus a collection $C$ of observed patterns has sufficient data for correct structural learning
if there is a pattern $C_h$ in $C$ for each $u$ such that $C_h$ contains both $u$ and its parent set in the underlying DAG. Also, domain knowledge is legitimate if, for each variable $u$, there is a hyperedge $C_h$ in $C$ that contains both $u$ and its parent set \citep{xie}. However, these conditions are not valid in the case of MVR chain graphs. In fact, for MVR CGs domain knowledge is legitimate if for each connected component $\tau$, there is a hyperedge $C_h$ in $C$ that contains both $\tau$ and its parent set $pa_G(\tau)$. Also, a collection $C$ of observed patterns has sufficient data for correct structural learning
if there is a pattern $C_h$ in $C$ for each connected component $\tau$ such that $C_h$ contains both $\tau$ and its parent set $pa_G(\tau)$ in the underlying MVR CG.
\begin{algorithm}
\caption{Construct an $m$-separation tree from a hypergraph}\label{hypergraph}
	\SetAlgoLined
	\KwIn{a hypergraph $C = \{C_1, \dots, C_H \}$, where each hyperedge $C_h$ is a variable set such that for each connected component $\tau$, there is a hyperedge $C_h$ in $C$ that contains both $\tau$ and its parent set $pa_G(\tau)$.}
	\KwOut{$T$, which is an $m$-separation tree for the hypergraph $C$.}
    For each hyperedge $C_h$, construct a complete undirected graph $\bar{G}_h$ with the edge set $\bar{E}_h=\{(u,v)|\forall u,v\in C_h\}=C_h\times C_h$\;
	Construct the entire undirected graph  $\bar{G}_V=(V,\bar{E})$, where $\bar{E}=\bar{E}_1\cup\dots\cup \bar{E}_H$\;
	Construct a junction tree $T$ by triangulating $\bar{G}_V$\;
\end{algorithm}

The correctness of Algorithm \ref{hypergraph} is proven in Appendix B. Note that we do not need any conditional independence
test in Algorithm \ref{hypergraph} to construct an $m$-separation tree. In this algorithm, we can use the proposed algorithm in \citep{bbhp} to construct a minimal triangulated graph. In order to illustrate Algorithm \ref{hypergraph}, see Figure \ref{Fig:hypergraph}.
\begin{figure}[ht]
	\centering
	\includegraphics[scale=.45]{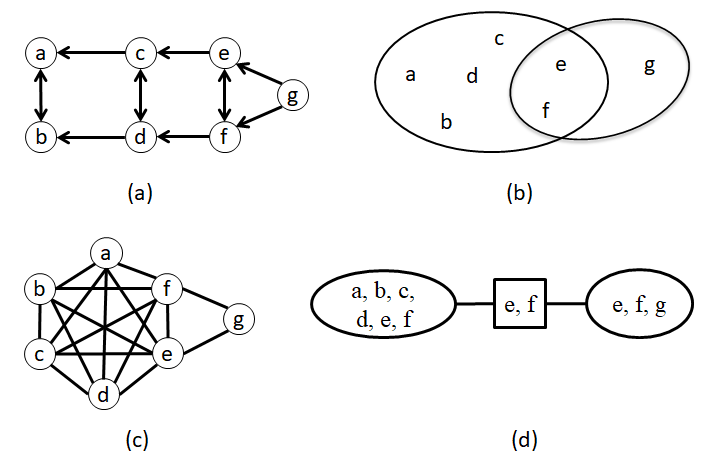}
	\caption{Construction the $m$-separation tree. (a) An MVR CG. (b) Domain knowledge of associations. (c) The undirected graph and triangulation. (d) The $m$-separation tree $T$.} \label{Fig:hypergraph}
\end{figure}

Guaranteeing the presence of both $\tau$ and its parent set $pa(\tau)$ in at least one hyperedge, as required in Algorithm \ref{hypergraph}, is a strong requirement, which may prevent the use of domain knowledge as a practical source of information for constructing MVR chain graphs. In addition, we remark that answering the question "how can one obtain this information?" is beyond the scope of this paper. The two examples that follow show that restricting the hyperedge contents in two natural ways lead to errors.

The example illustrated in Figure \ref{Fig:counterex} shows that, if for each variable $u$ there is a hyperedge $C_h$ in $C$ that contains both $u$ and its parent set, we cannot guarantee the correctness of our algorithm. Note that vertices $a$ and $d$ are separated in the tree $T$ of Figure \ref{Fig:counterex} part (d) by removing vertex $b$, but $a$ and $d$ are not $m$-separated given $b$ as can be verified using \ref{Fig:counterex} part (a). 
\begin{figure}[ht]
	\centering
	\includegraphics[scale=.5]{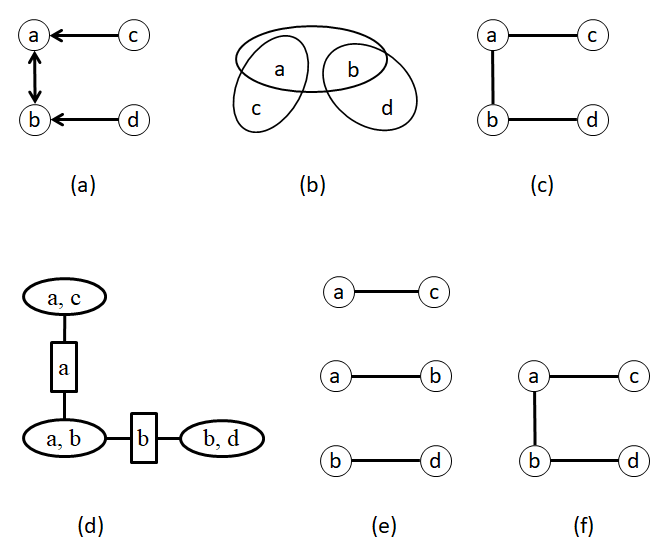}
	\caption{Insufficiency of having a hypergraph that contains both $u$ and its parent set for every $u\in V$. (a) An MVR CG. (b) Domain knowledge of associations. (c) The undirected graph constructed by union of complete graphs corresponding to each hyperedge, which is also a triangulated graph. (d) The junction tree $T$. (e) Local skeleton for every node of $T$. (f) The global skeleton and all $v$-structures.} \label{Fig:counterex}
\end{figure}
The example illustrated in Figure \ref{Fig:jvmethod} shows that, if for each variable $u$ there is a hyperedge $C_h$ in $C$ that contains both $u$ and its boundary set, Algorithm \ref{hypergraph} does not necessarily give an $m$-separation tree because, for example, $S=\{a,b\}$ separates $c$ and $d$ in tree $T$ of Figure \ref{Fig:jvmethod} part (d), but $S$ does not $m$-separate $c$ and $d$ in the MVR CG $G$ in Figure \ref{Fig:jvmethod} part (a).  
\begin{figure}[ht]
	\centering
	\includegraphics[scale=.5]{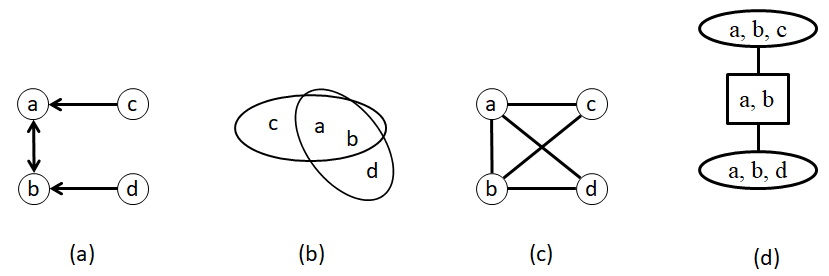}
	\caption{Insufficiency of having a hypergraph that contains both $u$ and its boundary set for every $u\in V$. (a) An MVR CG. (b) Domain knowledge of associations. (c) The undirected graph constructed by union of complete graphs corresponding to each hyperedge, which is also a triangulated graph. (d) The junction tree $T$, which is not an $m$-separation tree.} \label{Fig:jvmethod}
\end{figure}

 

\section{Decomposition of Structural Learning}\label{main-alg}
Applying the following theorem to structural learning, we can split a problem of searching for $m$-separators and building the skeleton of a CG into small problems for every node of $m$-separation tree $T$.
\begin{theorem}\label{thm1}
	Let $T$ be an $m$-separation tree for  CG $G$. Vertices $u$ and $v$ are $m$-separated by $S\subseteq V$ in $G$ if and
	only if (i) $u$ and $v$ are not contained together in any node $C$ of $T$ or (ii) there exists a node $C$ that contains both $u$
	and $v$ such that a subset $S'$ of $C$ $m$-separates $u$ and $v$.
\end{theorem}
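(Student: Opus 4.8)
The plan is to prove the two directions of the equivalence separately, treating the existential reading of the statement (``there is some $S\subseteq V$ that $m$-separates $u$ and $v$''), and reserving the real work for the forward implication.

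For the \emph{backward} direction, case (ii) is immediate: if some $S' \subseteq C \subseteq V$ $m$-separates $u$ and $v$, then $u$ and $v$ are $m$-separated by a subset of $V$. Case (i) is where the tree does the work. Since the $m$-separation trees used here are junction trees, they enjoy the running intersection property, so the nodes containing $u$ form a connected subtree $T_u$ and those containing $v$ a connected subtree $T_v$; because no node contains both, $T_u$ and $T_v$ are disjoint. I would then take any edge on the unique tree-path joining $T_u$ to $T_v$, let $S$ be its separator and $V_1, V_2$ the induced split. By construction every node containing $u$ lies on one side and every node containing $v$ on the other, so $u \in V_1 \setminus S$ and $v \in V_2 \setminus S$. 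Definition \ref{septree} then gives $\langle V_1 \setminus S, V_2 \setminus S \mid S\rangle_G$, whence $u$ and $v$ are $m$-separated by $S$.

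For the \emph{forward} direction, assume $u$ and $v$ are $m$-separated by some $S \subseteq V$ and that (i) fails, i.e. some node contains both; I must produce a node $C \ni u,v$ and a separator $S' \subseteq C$. I would induct on the number $H$ of nodes of $T$. The base case $H=1$ is trivial, since the single node is $V$ and $S'=S$ works. For $H \ge 2$, choose a leaf $C_l$ (with neighbour $C_{l'}$ and separator $R = C_l \cap C_{l'}$) that does \emph{not} contain both $u$ and $v$; such a leaf exists because the nodes containing both $u$ and $v$ form a subtree, so there is a leaf of $T$ outside it, and if that subtree is all of $T$ any leaf may be removed. Writing $V_1 = C_l$ and $V_2$ for the union of the remaining nodes (so $V_1 \cup V_2 = V$ and, by running intersection, $V_1 \cap V_2 = R$), the surviving shared node $C$, which lies in $V_2$, keeps both $u$ and $v$ inside $V_2$. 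The heart of the argument is then to pass from $G$ to the marginal model on $V_2$: exploiting that MVR chain graphs (equivalently, the acyclic directed mixed graphs carrying the same $m$-separation model) are closed under marginalization, I would form the marginal graph $G[V_2]$ by projecting out $V_1 \setminus R$, which preserves $m$-separation among subsets of $V_2$. Using the separation-tree property $\langle V_1 \setminus R, V_2 \setminus R \mid R\rangle_G$ together with the equivalence $\Im_m(G)=\Im_{m^\ast}(G)$, I would argue that any $m$-connecting chain between $u,v \in V_2$ may be confined to $V_2$, so that $S$ can be replaced by a separator $S'' \subseteq V_2$, i.e. $u$ and $v$ are $m$-separated by $S''$ in $G[V_2]$. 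Finally I would check that the pruned tree $T_2$ (whose nodes are those of $T$ other than $C_l$, with inherited edges) is an $m$-separation tree for $G[V_2]$ -- its separators are among those of $T$, and the induced split sets differ only by marginalized vertices -- and invoke the induction hypothesis on $(T_2, G[V_2])$ to get $S' \subseteq C$ $m$-separating $u$ and $v$ in $G[V_2]$, hence in $G$.

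I expect the main obstacle to be precisely this localization step: confining the separator to $V_2$ and certifying that $T_2$ is a genuine $m$-separation tree for the marginal. This is where the argument goes beyond the DAG case of \citep{xie}, and the two facts that make it survive the presence of bidirected edges are the closure of MVR chain graphs under marginalization and the augmented-graph characterization $\Im_m(G)=\Im_{m^\ast}(G)$. The running-intersection bookkeeping (choosing the leaf, tracking which side $u$, $v$, and the ancestors relevant to an $m$-connecting chain fall on) is routine but must be handled carefully so that no vertex needed to complete an $m$-connection is lost when passing to $G[V_2]$.
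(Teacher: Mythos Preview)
Your backward direction matches the paper's: (ii) is trivial in both, and for (i) the paper invokes Lemma~\ref{lem2}, which formalizes exactly your argument that any separator on the tree path between the $u$-subtree and the $v$-subtree $m$-separates $u$ from $v$. Your appeal to running intersection is legitimate even though Definition~\ref{septree} does not assume it explicitly: if some $x\in C_i\cap C_j$ were missing from a separator on the $C_i$--$C_j$ path, that separator would have to $m$-separate $x$ from itself.

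Your forward direction takes a genuinely different route from the paper, and it has a real gap. The paper neither inducts nor marginalizes: it invokes Lemma~\ref{lem5}, which, given that $u$ and $v$ share a node, directly exhibits a node $C'$ and a set $S'=an(u\cup v)\cap C'\subseteq C'$ that $m$-separates them, via a case analysis on chains supported by Lemmas~\ref{lem1}--\ref{lem4}. Your inductive scheme hinges on two claims that do not go through as stated. First, MVR chain graphs are \emph{not} closed under marginalization: take $G$ on $\{a,b,c,d\}$ with edges $a\leftrightarrow b$, $b\to c$, $d\to a$, $d\to c$; this is an MVR CG, but its latent projection onto $\{a,b,c\}$ has edges $a\leftrightarrow b$, $b\to c$, $a\leftrightarrow c$ (since no subset of $\{b\}$ separates $a$ from $c$ in $G$), which contains the partially directed cycle $a\leftrightarrow b\to c\leftrightarrow a$ and is therefore an ancestral ADMG but not an MVR CG. Hence you cannot apply the induction hypothesis---which is Theorem~\ref{thm1}, stated for MVR CGs---to $(T_2,G[V_2])$ without first generalizing the whole statement to maximal ancestral graphs. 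Second, and more fundamentally, your ``localization step'' (replacing $S$ by some $S''\subseteq V_2$) is precisely the content of Lemma~\ref{lem5}: once you know that the node $C'$ it produces cannot be the removed leaf $C_l$ (because $C_l$ does not contain both $u$ and $v$), you get $S'\subseteq C'\subseteq V_2$ for free. Your sketch via ``confining $m$-connecting chains to $V_2$'' does not supply an independent proof of this; making it rigorous requires knowing that some node of $T$ contains $u$ together with $bd(u)$ (Lemma~\ref{lem4}) and then carrying out essentially the same chain-by-chain analysis as Lemma~\ref{lem5}, at which point the direct construction is shorter than the inductive detour.
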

According to Theorem \ref{thm1}, a problem of searching for an $m$-separator $S$ of $u$ and $v$ in all possible subsets of $V$ is
localized to all possible subsets of nodes in an $m$-separation tree that contain $u$ and $v$. For a given $m$-separation tree $T$
with the node set $C = \{C_1,\dots , C_H \}$, we can recover the skeleton and all $v$-structures for a CG as follows. First
we construct a local skeleton for every node $C_h$ of $T$, which is constructed by starting with a complete undirected
subgraph and removing an undirected edge $(u, v)$ if there is a subset $S$ of $C_h$ such that $u$ and $v$ are independent
conditional on $S$. Then, in order to construct the global skeleton, we combine all these local skeletons together and remove
edges that are present in some local skeletons but absent in other local skeletons. Then we determine every $v$-structure
if two non-adjacent vertices $u$ and $v$ have a common neighbor in the global skeleton but the neighbor is not contained
in the $m$-separator of $u$ and $v$. Finally we can orient more undirected edges if none of them creates either a
partially directed cycle or a new $v$-structure (see, for example, Figure \ref{Fig:alg1}). This process is formally described in the following algorithm:

\begin{algorithm}[ht]
\caption{A recovery algorithm for MVR chain graphs}\label{alg1}
	\SetAlgoLined
	\KwIn{a probability distribution $p$ faithful to an
unknown MVR CG $G$.}
	\KwOut{the pattern of MVR CG $G$.}
    Construct an $m$-separation tree $T$ with a node set $C = \{C_1, \dots, C_H \}$ as discussed in Section \ref{construct-trees}\;
    Set $S=\emptyset$\;
    \For{$h\gets 1$ \KwTo $H$}{
    Start from a complete undirected graph $\bar{G}_h$ with         vertex set $C_h$\;
    \For{\textrm{each vertex pair $\{u,v\}\subseteq C_h$ }}{\If{$\exists S_{uv}\subseteq C_h \textrm{ such that } u \perp\!\!\!\perp v | S_{uv}$}{
    Delete the edge $(u,v)$ in $\bar{G}_h$\;
    Add $S_{uv}$ to $S$;
    }
    }
    }
    Initialize the edge set $\bar{E}_V$ of $\bar{G}_V$ as the union of all edge sets of $\bar{G}_h, h=1,\dots, H$\;
    \For{\textrm{each Vertex pair $\{u,v\}$ contained in more than one tree node and $(u,v)\in \bar{G}_V$}}{
    \If{$\exists C_h \textrm{ such that } \{u,v\}\subseteq C_h \textrm{ and } \{u,v\}\not\in \bar{E}_h$}{
    Delete the edge $(u,v)$ in $\bar{G}_V$\;
    }
    }
    \For{\textrm{each $m$-separator $S_{uv}$ in the list $S$}}{\If{\textrm{$u\lcircle w\rcircle v$ appears in the global skeleton
	and $w$ is not in $S_{uv}$}}{
    \tcc{$u\lcircle w$ means $u\gets w$ or $u-w$. Also, $w\rcircle v$ means $w\to v$ or $w-v.$}
    Determine a $v$-structure $u\lacircle w\racircle v$\;
    }
    }
\end{algorithm}
\begin{figure}[ht]
	\centering
	\includegraphics[scale=.4]{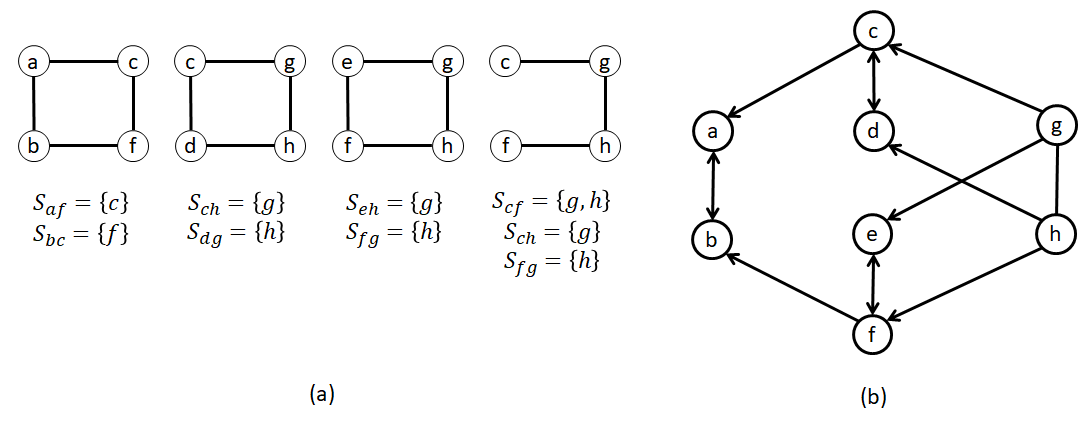}
	\caption{(a) Local skeletons for every node of the $m$-separation tree in Figure \ref{Fig:tree1}. (b) The global skeleton and all $v$-structures.} \label{Fig:alg1}
\end{figure}

The following algorithm returns an MVR chain graph that contains exactly the minimum set
of bidirected edges for its Markov equivalence
class. For the correctness of lines 2-7 in Algorithm \ref{alg2}, see \citep{sp}.
\begin{algorithm}
\caption{A recovery algorithm for MVR chain graphs with minimum set of bidirected edges for its equivalence
class}\label{alg2}
	\SetAlgoLined
	\KwIn{a probability distribution $p$ faithful to an
unknown MVR CG $G$.}
	\KwOut{an MVR CG $G'$ s.t.
$G$ and $G'$ are Markov equivalent and $G'$ has exactly the minimum set of bidirected edges for its equivalence
class.}
Call Algorithm \ref{alg1} to construct $G'$, which is the equivalence class of MVR CGs for $G$\;
Apply rules 1-3 in Figure \ref{Fig:rules} while possible\;
\tcc{After this line, the learned graph is the \textit{essential graph} of MVR CG $G$ i.e.,  it
has the same skeleton as $G$ and contain all and only the arrowheads that
are shared by all MVR CGs in the Markov equivalence class of $G$ \citep{essentialmvrcgs}.}
Let $G'_u$ be the subgraph of $G'$ containing only the
nodes and the undirected edges in $G'$\;
Let $T$ be the junction tree of $G'_u$\; 
\tcc{If $G'_u$ is disconnected, the cliques belonging to different connected components can be linked with empty separators, as described in \cite[Theorem 4.8]{Golumbic}.}
Order the cliques $C_1,\cdots , C_n$ of $G'_u$ s.t. $C_1$ is the root of $T$ and if $C_i$ is closer to the root than $C_j$ in $T$ then $C_i < C_j$\;
Order the nodes such that if $A\in C_i$, $B\in C_j$, and $C_i < C_j$ then $A < B$\;
Orient the undirected edges in $G'$ according to the ordering obtained in line 6.
\end{algorithm}
\begin{figure}[ht]
	\centering
	\includegraphics[scale=.4]{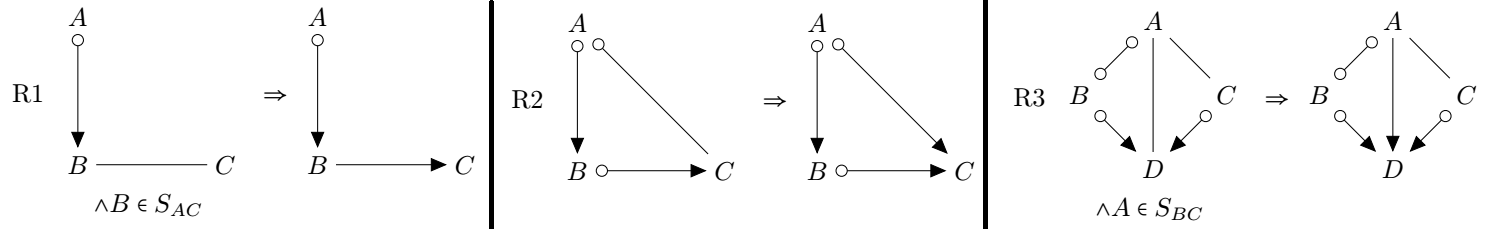}
	\caption{The Rules \citep{sp}} \label{Fig:rules}
\end{figure}

According to Theorem \ref{thm1}, we can prove that the global skeleton and all $v$-structures obtained by applying the decomposition in Algorithm \ref{alg1} are correct, that is, they are the same as those obtained from the joint distribution of $V$, see
Appendix A for the details of proof. Note that separators in an $m$-separation tree may not be complete in the augmented graph.
Thus the decomposition is weaker than the decomposition usually defined for parameter estimation \citep{cdls,l}.

\section{Complexity Analysis and Advantages}\label{complexity}
In this section, we start by comparing our algorithm with the main algorithm in \citep{xie} that is designed
specifically for DAG structural learning when the underlying graph structure is a DAG. We make
this choice of the DAG specific algorithm so that both algorithms can have the same separation tree
as input and hence are directly comparable. 

In a DAG, all chain components are singletons. Therefore, sufficiency of having a hypergraph that contains both $\tau$ and its parent set for every chain component is equivalent with having a hypergraph that contains both $u$ and its parent set for every $u\in V$, when the underlying graph structure is a DAG. Therefore, it is obvious that our algorithm has the same effect and the same complexity as the main algorithm in \citep{xie}.

The same advantages mentioned by \citep{xie} for their BN structural learning algorithm hold for our algorithm when applied to MVR CGs. For the reader convenience, we list them here. 
First, by using the $m$-separation tree, independence tests are performed only conditionally on smaller sets
contained in a node of the $m$-separation tree rather than on the full set of all other variables. Thus our algorithm has
higher power for statistical tests. 
Second, the computational complexity can be reduced. This complexity analysis focuses only on the number of
conditional independence tests for constructing the equivalence class. Decomposition of graphs is a computationally
simple task compared to the task of testing conditional independence for a large number of triples of sets of variables. The triangulation of an undirected graph is used in our
algorithms to construct an $m$-separation from an undirected independence graph. Although the problem for optimally
triangulating an undirected graph is NP-hard, sub-optimal triangulation methods \citep{bbhp} may be used provided
that the obtained tree does not contain too large nodes to test conditional independencies. Two of the best known
algorithms are lexicographic search and maximum cardinality search, and their complexities are
$O(|V||E|)$ and $O(|V|+ |E|)$, respectively \citep{bbhp}. Thus in our algorithms, the conditional independence tests dominate the algorithmic
complexity.

The complexity of the Algorithm \ref{alg1} is $O(Hm^22^m)$ as claimed in \citep[Section 6]{xie}, where $H$ is the number of hyperedges (usually $H \ll |V|$) and $m=\max_h|C_h|$ where $|C_h|$ denotes the number of variables in $C_h$ ($m$ usually is much less than $|V|$).

\section{Evaluation}\label{evaluation}
In this section, we evaluate the performance of our algorithms in various setups
using simulated / synthetic data sets. We first compare the performance of our algorithm with the PC-like
learning algorithm \citep{sp} by running them
on randomly generated MVR chain graphs. (A brief description of the PC-like algorithm is provided at the beginning of Section \ref{discussion}.) We then compare our method with the PC-like algorithm  on different discrete Bayesian networks such as \href{http://www.bnlearn.com/bnrepository/}{ASIA, INSURANCE, ALARM, and HAILFINDER} that have
been widely used in evaluating the performance of structural learning algorithms. Empirical simulations show that our algorithm achieves
competitive results with the PC-like learning algorithm; in particular, in the Gaussian case the decomposition-based algorithm outperforms (except in running time) the PC-like algorithm. 
Algorithms \ref{alg1} , \ref{alg2}, and the PC-like algorithm have been implemented in the R language. All the results reported here are
based on our R implementation \citep{jv3}.

\subsection{Performance Evaluation on Random MVR Chain Graphs (Gaussian case)}

To investigate the performance of the decomposition-based learning method, we use the same approach that \citep{mxg} used in
evaluating the performance of the LCD algorithm on LWF chain graphs. We run our algorithms and the PC-like algorithm
on randomly generated MVR chain graphs and then we compare the results and report summary error measures in all cases.

\subsubsection{Data Generation Procedure}
First we explain the way in which the random MVR chain graphs and random samples are generated.
Given a vertex set $V$ , let $p = |V|$ and $N$ denote the average degree of edges (including bidirected
and pointing out and pointing in) for each vertex. We generate a random MVR chain graph on $V$ as
follows:
\begin{itemize}
    \item Choose one element, say $k$, of the vector $c=(0.1, 0.2, 0.3, 0.4, 0.5)$ randomly\footnote{In the case of $p=40,50$ we use $c=(0.1,0.2)$.}.
    \item Use the randDAG function from the \href{https://cran.r-project.org/web/packages/pcalg/index.html}{pcalg} R package and generate an un-weighted random Erdos-Renyi graph, which is a DAG with $p+(k\times p)$ nodes and $N$ expected number of neighbours per node.
    \item Use the AG function from the \href{https://cran.r-project.org/web/packages/ggm/index.html}{ggm} R package and marginalize out $k\times p$ nodes to obtain a random MVR chain graph with $p$ nodes and $N$ expected number of neighbours per node. If the obtained graph is not an MVR chain graph, repeat this procedure until an MVR CG is obtained.
\end{itemize}

The rnorm.cg function from the \href{http://www2.uaem.mx/r-mirror/web/packages/lcd/index.html}{lcd} R package was used to generate a desired number of normal random samples from the canonical DAG \citep{rs} corresponding to the obtained MVR chain graph in the first step. Notice that faithfulness is not necessarily guaranteed by the current sampling procedure \citep{mxg}.

\subsubsection{Experimental Results for Random MVR Chain Graphs (Gaussian case)}
We evaluate the performance of the decomposition-based and PC-like algorithms in terms of five measurements: (a) the true positive
rate (TPR)\footnote{Also known as sensitivity, recall, and hit rate.}, (b) the false positive rate (FPR)\footnote{Also known as fall-out.}, (c) accuracy (ACC) for the skeleton, (d) the structural Hamming distance (SHD)\footnote{This is the metric described in \citep{Tsamardinos2006} to  compare the
structure of the learned and the original graphs.}, and (e) run-time for the pattern recovery algorithms. In short, $TPR=\frac{\textrm{true positive } (TP)}{\textrm{the number of real positive cases in the data } (Pos)}$ is the ratio of  the number of correctly identified edges over total number of edges, $FPR=\frac{\textrm{false positive }(FP)}{\textrm{the number of real negative cases in the data }(Neg)}$ is the ratio of the number of incorrectly identified edges over total number of gaps, $ACC=\frac{\textrm{true positive }(TP) +\textrm{ true negative }(TN)}{Pos+Neg}$ and
$SHD$ is the number of legitimate operations needed to change the current pattern to the true one,
where legitimate operations are: (a) add or delete an edge and (b) insert, delete or reverse an edge
orientation. In principle, a large TPR and ACC, a small FPR and SHD indicate good performance.

In our simulation, we change three parameters $p$ (the number of vertices), $n$ (sample size) and
$N$ (expected number of adjacent vertices) as follows:
\begin{itemize}
    \item $p\in\{10, 20, 30, 40, 50\}$,
    \item $n\in\{300, 1000, 3000, 10000\}$, and
    \item $N\in\{2,3,5,8,10\}$.
\end{itemize}

For each $(p,N)$ combination, we first generate 25 random MVR chain graphs. We then generate a
random Gaussian distribution based on each corresponding canonical DAG and draw an identically independently distributed
(i.i.d.) sample of size $n$ from this distribution for each possible $n$, and finally we remove those columns (if any exist) that correspond to the hidden variables. For each sample, three different
significance levels $\alpha=0.05/0.01/0.005$ are used to perform the hypothesis tests. For decomposition-based algorithm we consider two different versions: The first version  uses Algorithm \ref{alg1} and the three rules in Algorithm \ref{alg2}, while the second version uses both Algorithm \ref{alg1} and \ref{alg2}. Since the learned graph of the first version may contain some undirected edges, we call it the \textit{essential recovery algorithm}. However, removing all directed and bidirected edges from the learned graph results in a chordal graph \citep{sp}. Furthermore, the learned graph has exactly the (unique) minimum set of bidirected edges for its Markov equivalence class \citep{sp}. The second version of the decomposition-based algorithm returns an MVR chain graph that has exactly the
minimum set of bidirected edges for its equivalence
class. A similar approach is used for the PC-like algorithm. We then
compare the results to access the performance of the decomposition-based algorithm against the PC-like algorithm. The entire plots of the error measures and running times can be seen in the \href{https://www.dropbox.com/sh/iynnlwyu8il7m3v/AACk8SyIEn7s-W9NRlLnz0DDa?dl=0}{supplementary document} \citep{jv3}.
From the plots, we infer that: (a) both
algorithms yield better results on sparse graphs $(N = 2,3)$ than on dense graphs $(N = 5,8,10)$, for example see Figures \ref{fig:tprfpracc1} and \ref{fig:shd1}; (b) for both algorithms, typically the TPR and ACC increase with sample size, for example see Figure \ref{fig:tprfpracc1}; (c) for both algorithms, typically the SHD decreases with sample size for sparse graphs $(N = 2,3)$. For $N=5$ the SHD decreases with sample size for the decomposition-based algorithm while the SHD has no clear dependence on the sample size for the PC-like algorithm in this case. Typically, for the PC-like algorithm the SHD increases with sample size for dense graphs $(N = 8,10)$ while the SHD has no clear dependence on the sample size for the decomposition-based algorithm in these cases, for example see Figure \ref{fig:shd1}; (d) a large significance level $(\alpha=0.05)$ typically yields large
TPR, FPR, and SHD, for example see Figures \ref{fig:tprfpracc1} and \ref{fig:shd1}; (e) in almost all cases, the performance of the decomposition-based algorithm based on all error measures i.e., TPR, FPR, ACC, and SHD is better than the performance of the PC-like algorithm, for example see Figure \ref{fig:tprfpracc1} and \ref{fig:shd1}; (f) In most cases, error measures based on $\alpha=0.01$ and $\alpha=0.005$ are very close, for example see Figure \ref{fig:tprfpracc1} and \ref{fig:shd1}. Generally, our empirical results suggests that in order to obtain a better performance, we can choose a small value (say $\alpha=0.005$ or 0.01) for
the significance level of individual tests along with large sample (say $n=3000$ or 10000). However, the optimal value for a desired overall error rate may depend on the sample size, significance level, and the sparsity of the underlying graph.
\begin{figure}
	\centering
	\includegraphics[scale=.25,page=9]{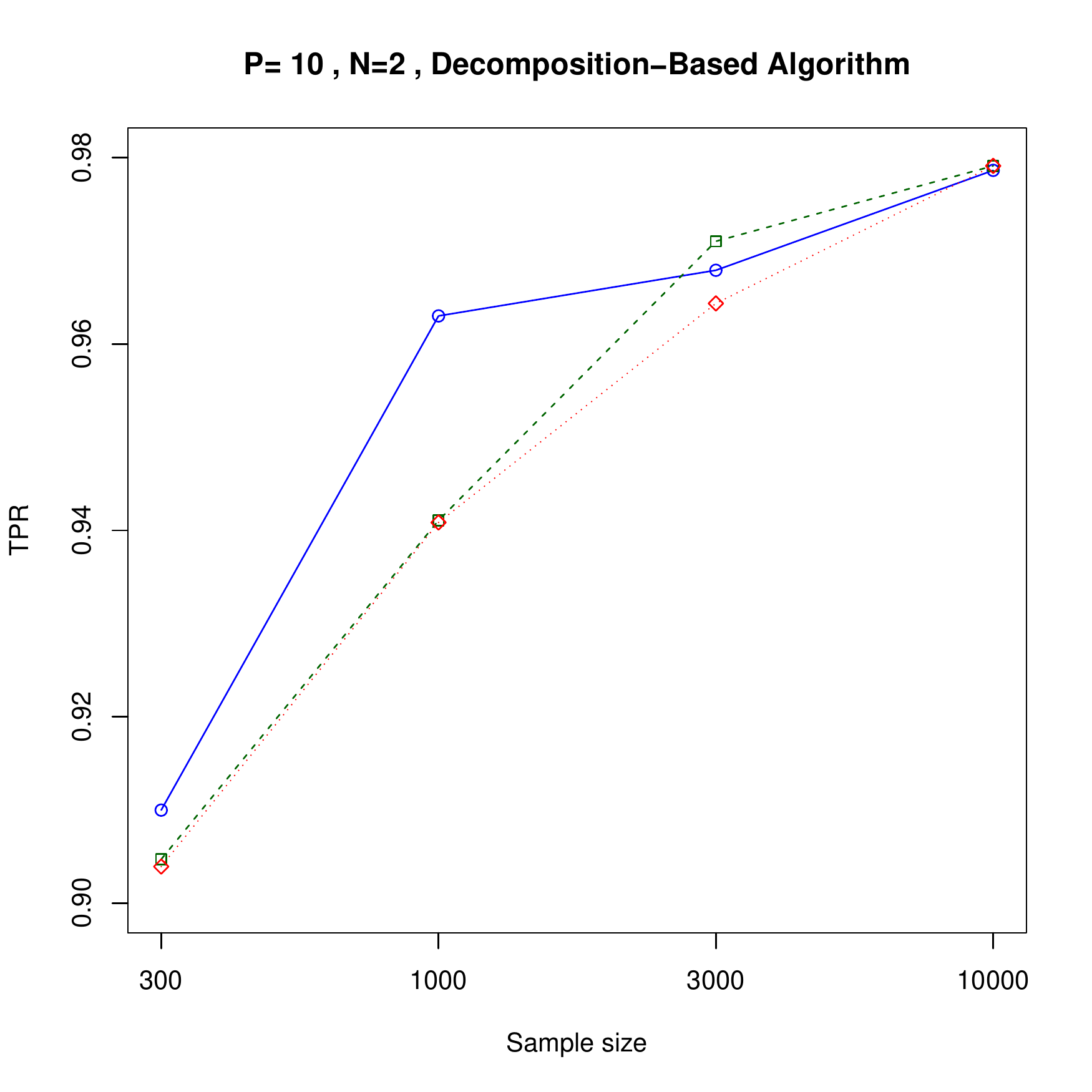}
	\includegraphics[scale=.25,page=9]{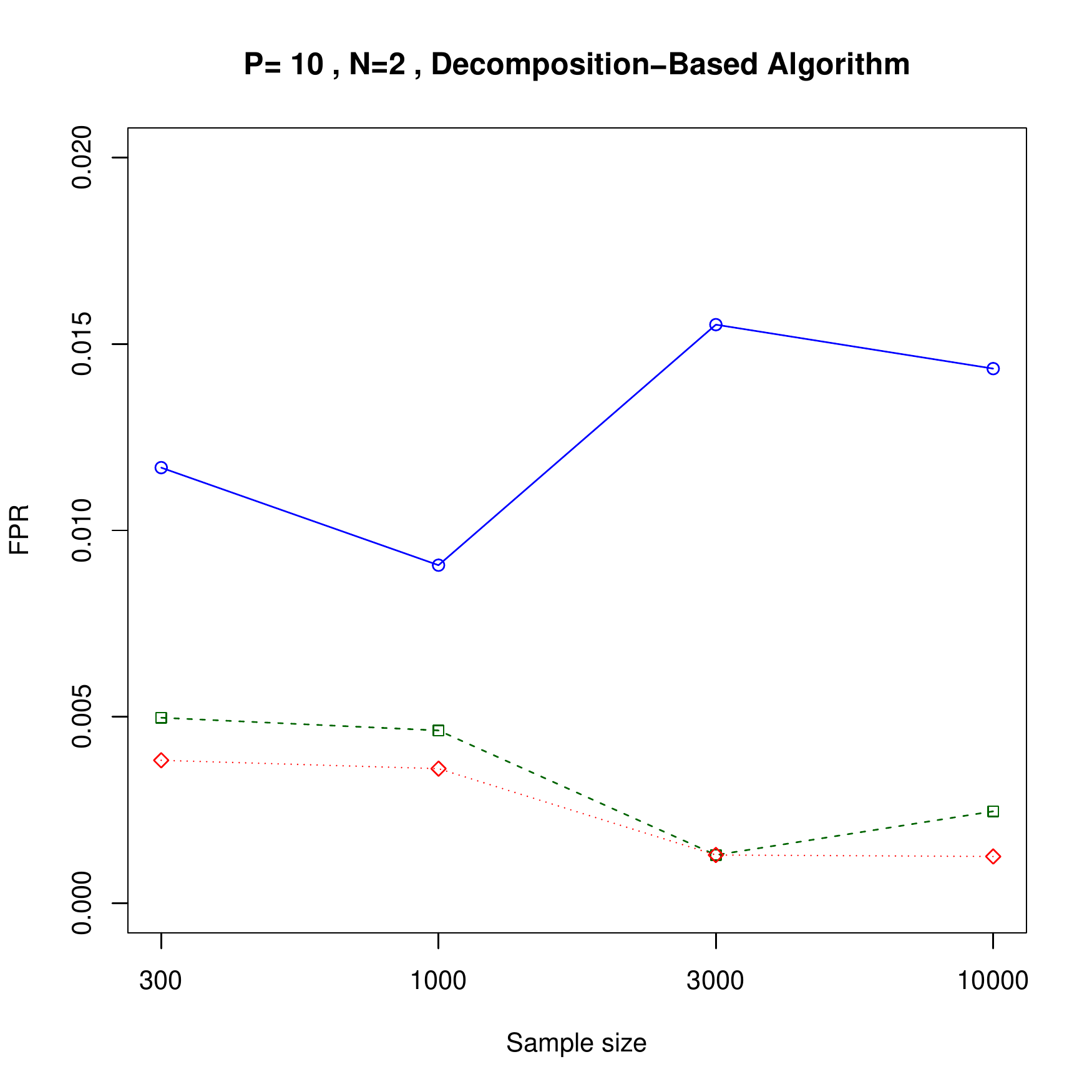}
	\includegraphics[scale=.25,page=9]{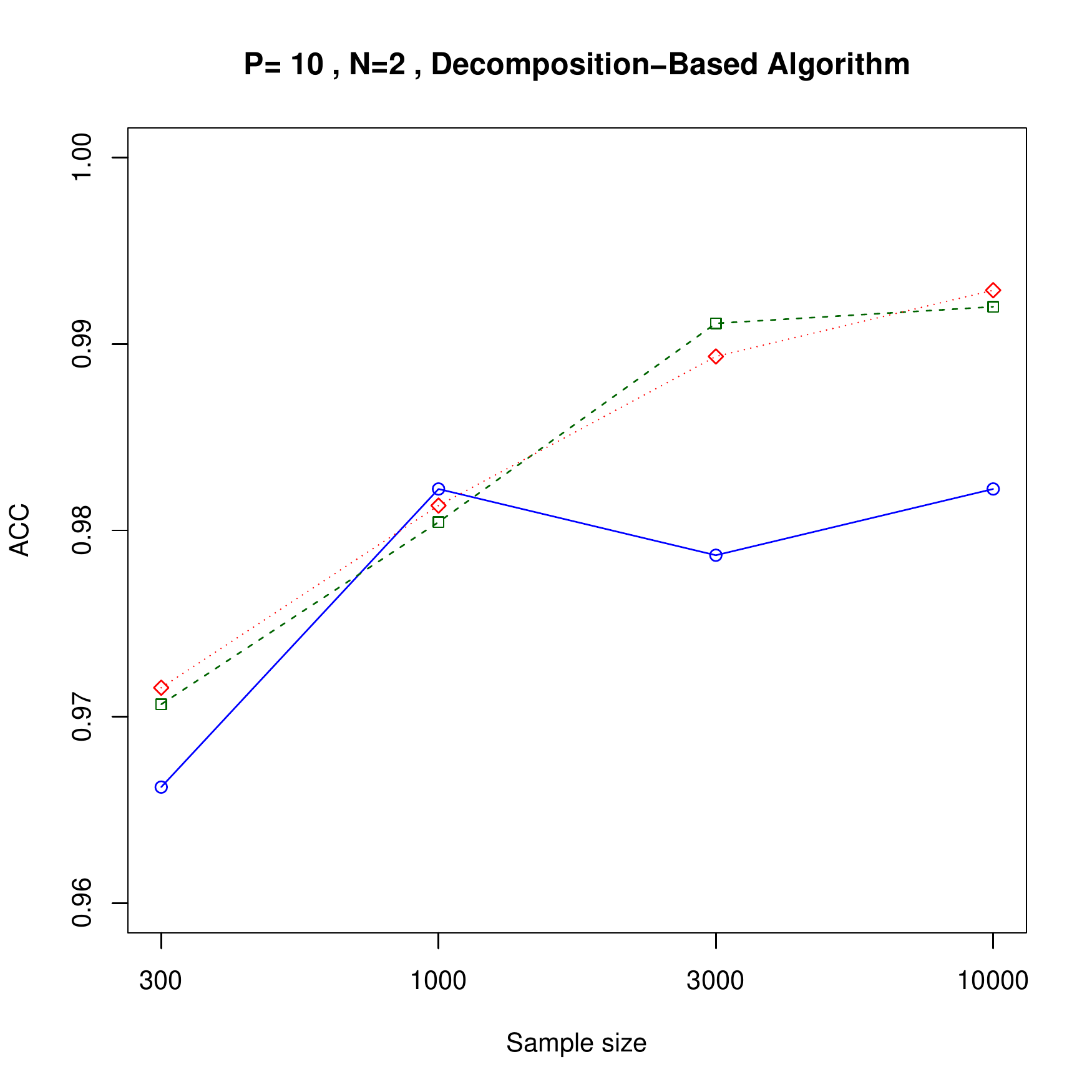}
	\includegraphics[scale=.25,page=9]{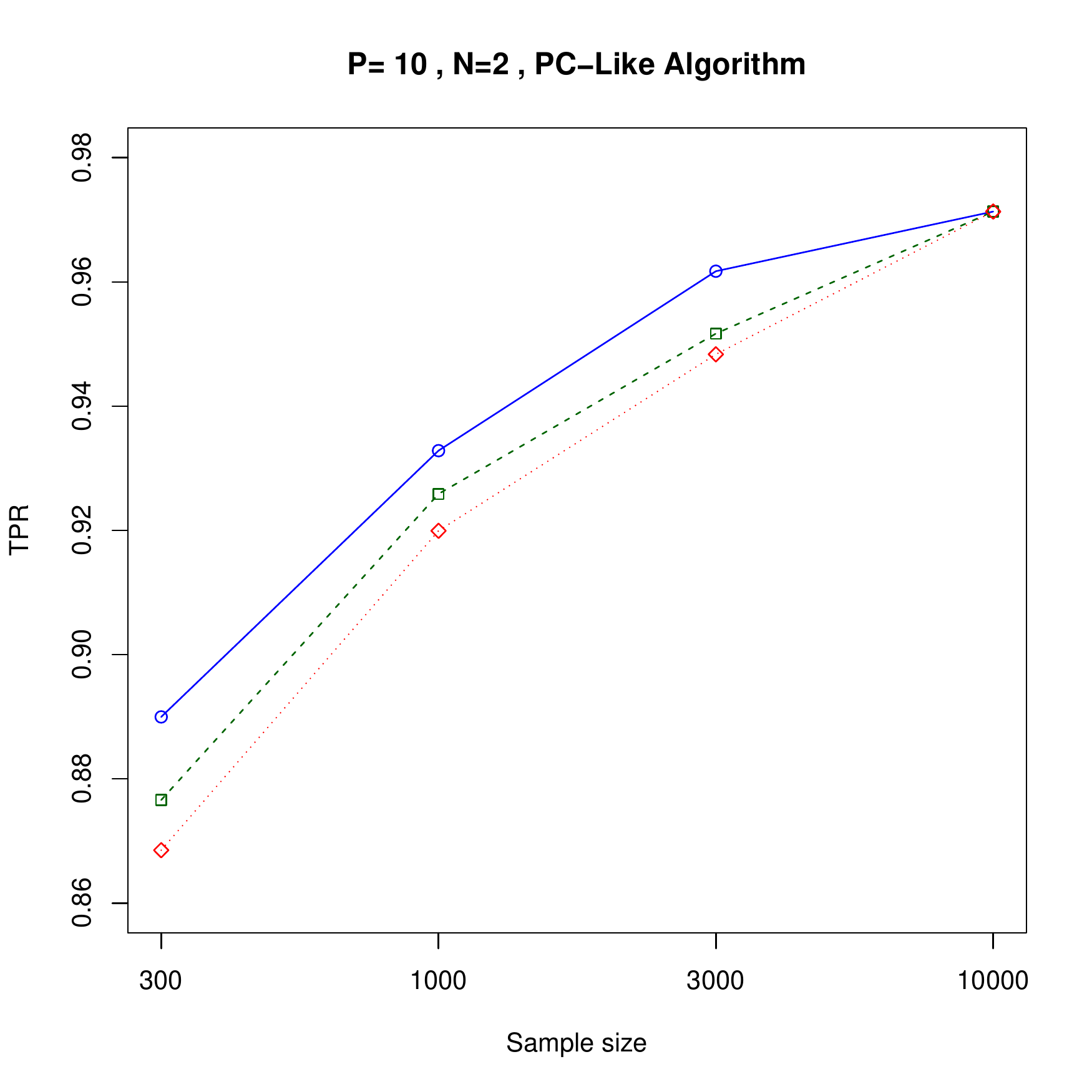}
	\includegraphics[scale=.25,page=9]{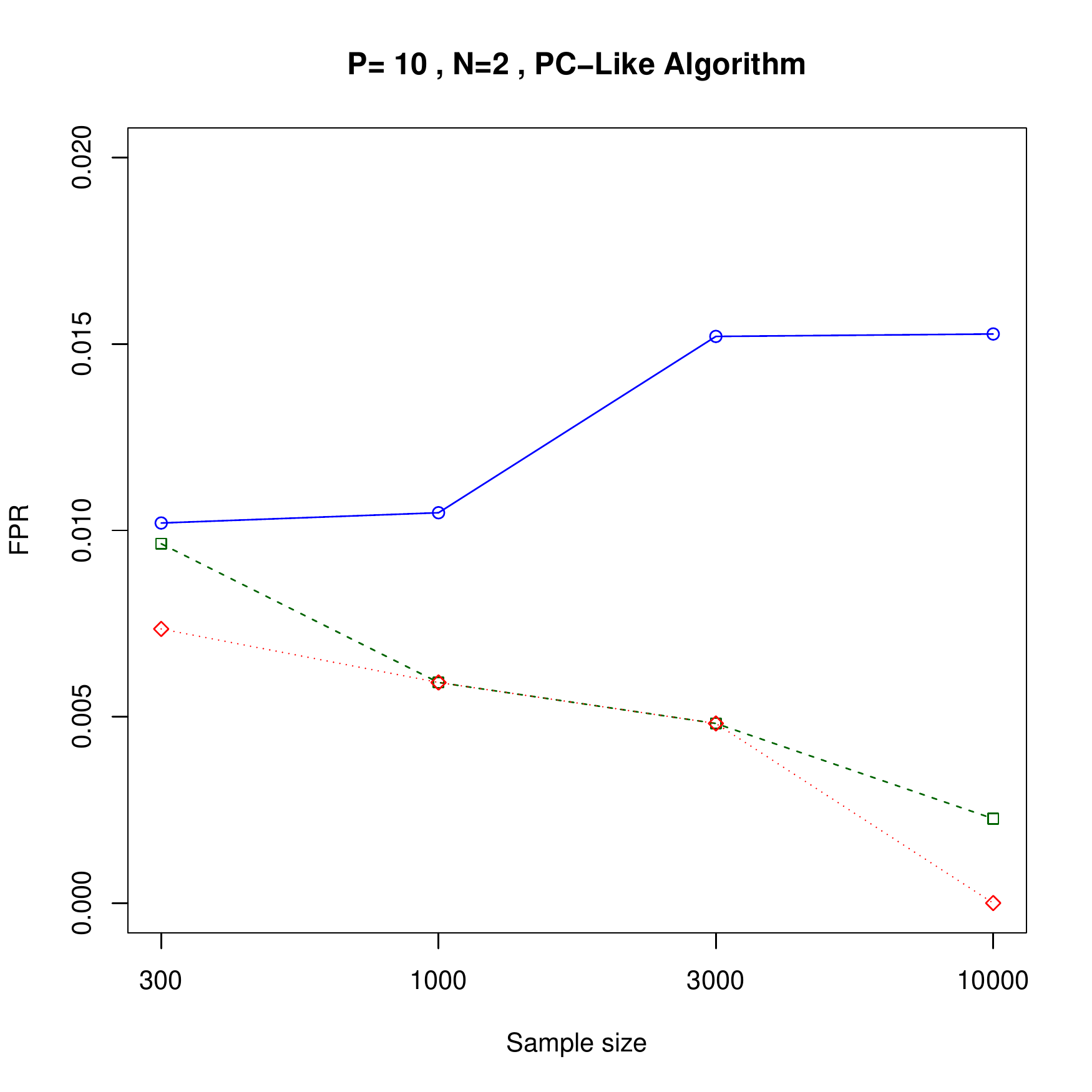}
	\includegraphics[scale=.25,page=9]{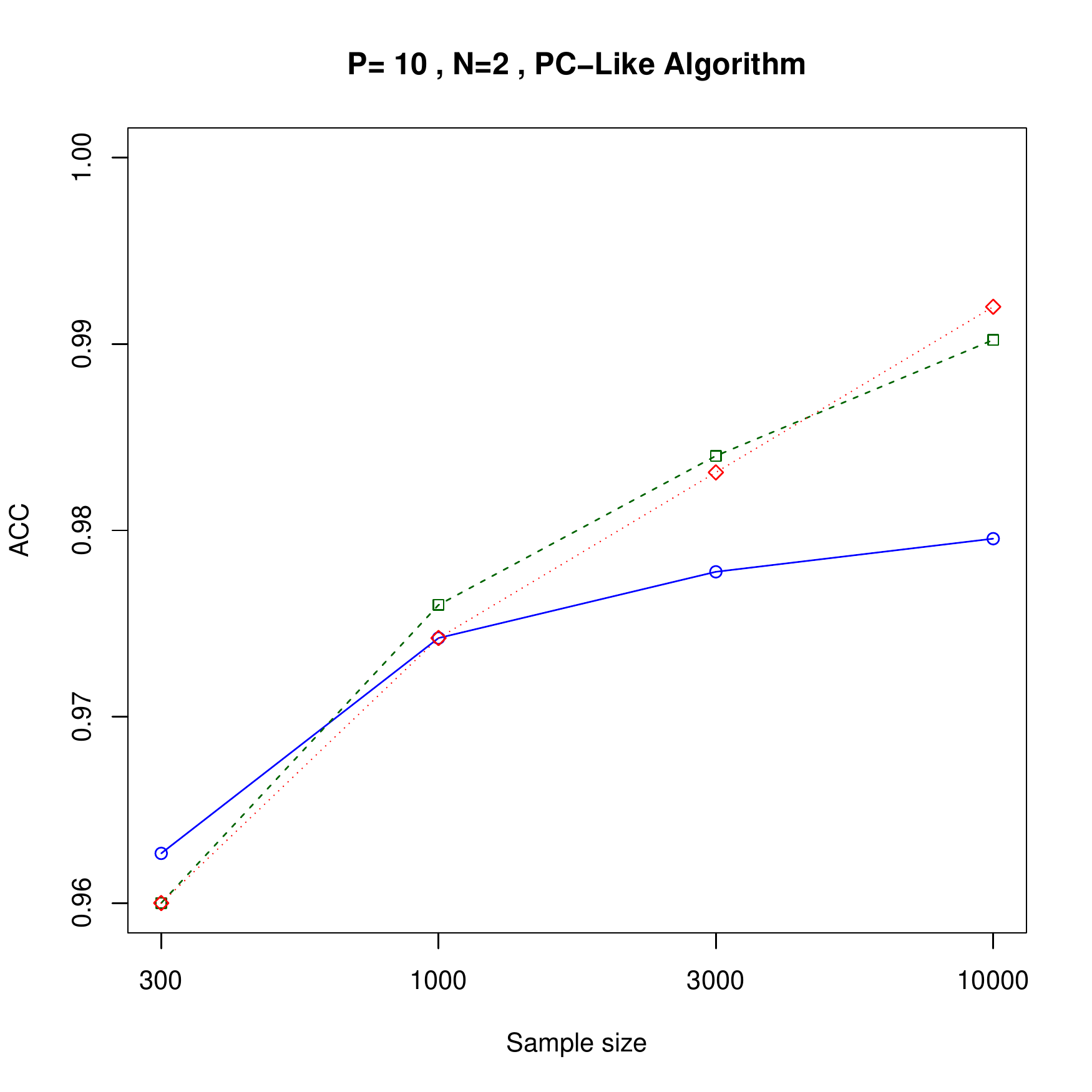}
	\includegraphics[scale=.25,page=12]{images/tpr_lcd.pdf}
	\includegraphics[scale=.25,page=12]{images/fpr_lcd.pdf}
	\includegraphics[scale=.25,page=12]{images/acc_lcd.pdf}
	\includegraphics[scale=.25,page=12]{images/tpr_pc.pdf}
	\includegraphics[scale=.25,page=12]{images/fpr_pc.pdf}
	\includegraphics[scale=.25,page=12]{images/acc_pc.pdf}
	\caption{Error measures of the decomposition-based and PC-like algorithms for randomly generated Gaussian chain graph models:
		average over 25 repetitions with 30 variables. The four rows correspond to N = 2 and 8.  The three columns give three error measures: TPR, FPR and ACC in each
		setting respectively. In each plot, the solid (blue)/dashed (green)/dotted (red) lines correspond to significance
		levels $\alpha=0.05/0.01/0.005$.}
	\label{fig:tprfpracc1}
\end{figure}

\begin{figure}
	\centering
	\includegraphics[scale=.21,page=9]{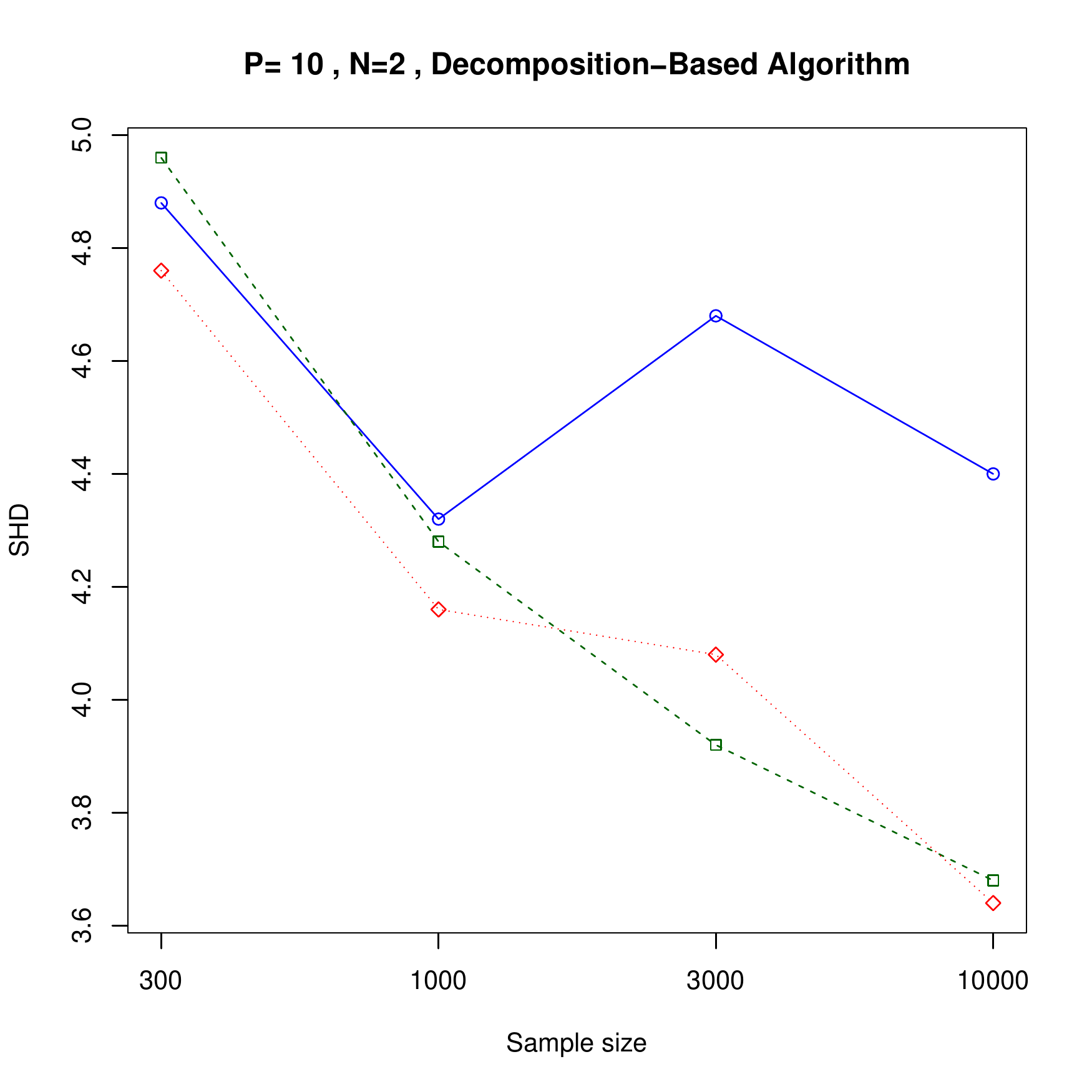}
	\includegraphics[scale=.21,page=9]{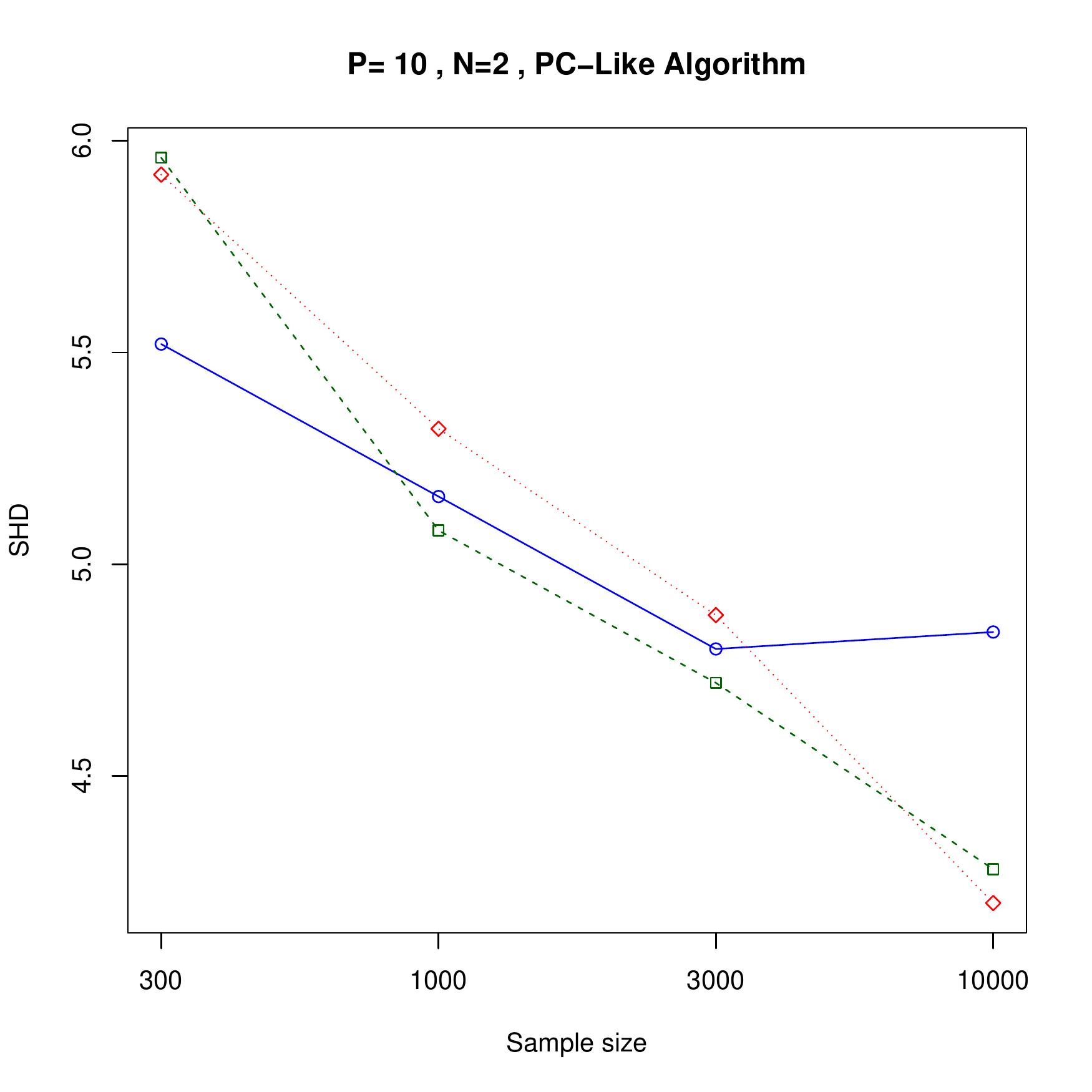}
	\includegraphics[scale=.21,page=9]{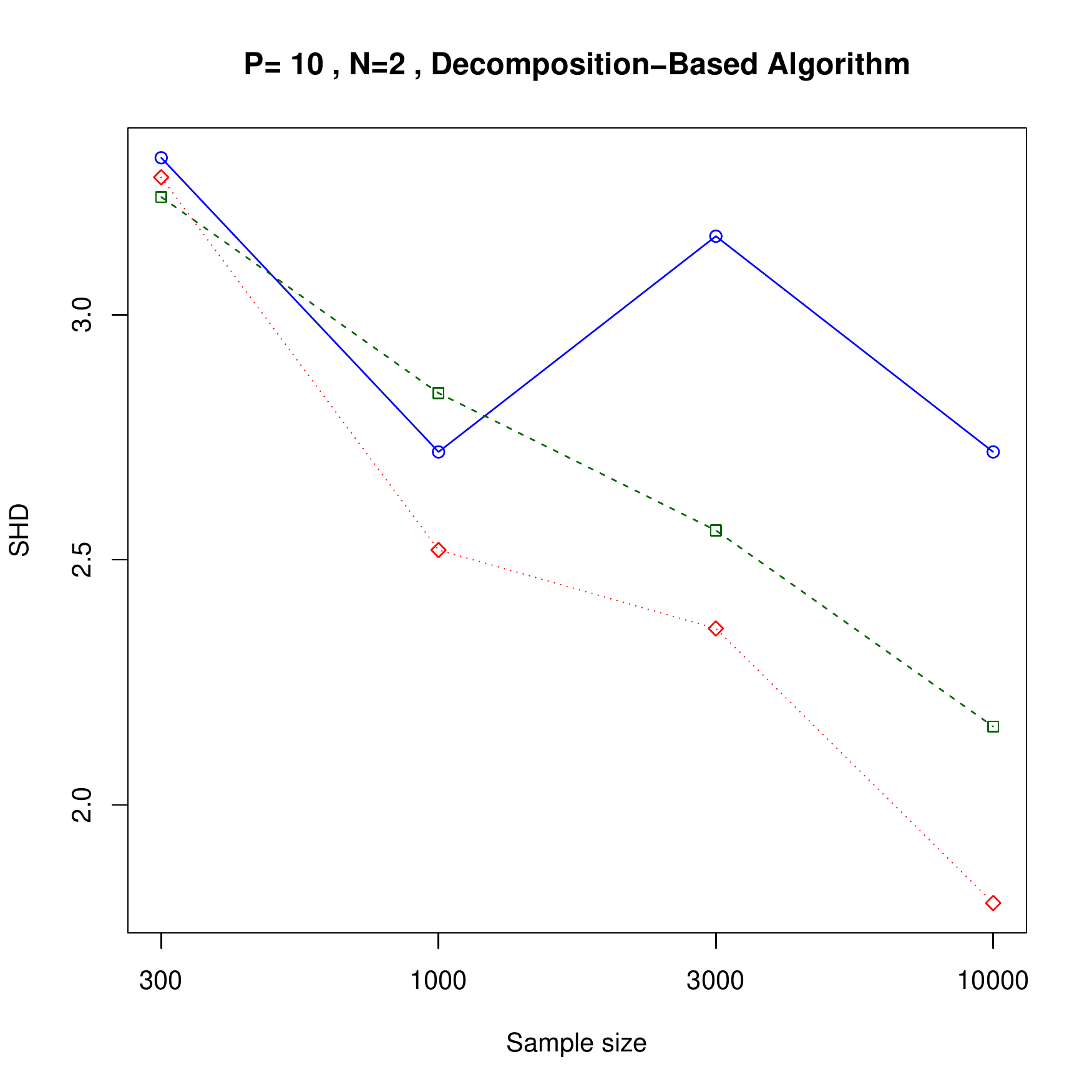}
	\includegraphics[scale=.21,page=9]{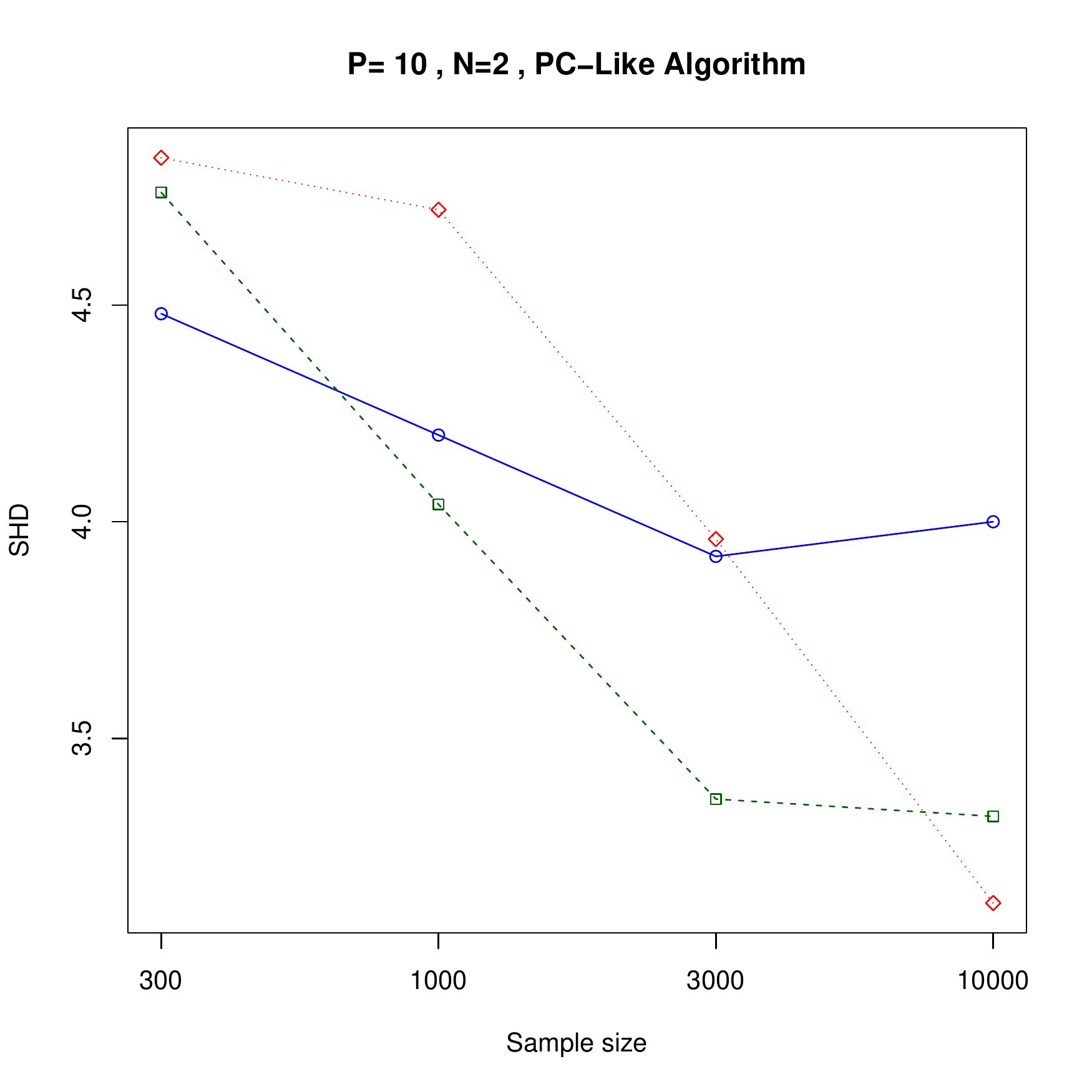}
	\includegraphics[scale=.21,page=11]{images/shd_lcd.pdf}
	\includegraphics[scale=.21,page=11]{images/shd_pc.pdf}
	\includegraphics[scale=.21,page=11]{images/min_shd_lcd.pdf}
	\includegraphics[scale=.21,page=11]{images/min_shd_pc.pdf}
	\includegraphics[scale=.21,page=12]{images/shd_lcd.pdf}
	\includegraphics[scale=.21,page=12]{images/shd_pc.pdf}
	\includegraphics[scale=.21,page=12]{images/min_shd_lcd.pdf}
	\includegraphics[scale=.21,page=12]{images/min_shd_pc.pdf}
	\caption{Error measure SHD of the decomposition-based and PC-like algorithms for randomly generated Gaussian chain graph models:
		average over 25 repetitions with 30 variables. The first row correspond to N = 2, the second row correspond to N=5, and the third row correspond to N=8. The first two columns correspond to the essential  recovery while the last two columns correspond to the minimum bidirected recovery respectively. In each plot, the solid (blue)/dashed (green)/dotted (red) lines correspond to significance
		levels $\alpha=0.05/0.01/0.005$.}
	\label{fig:shd1}
\end{figure}

Considering average running times vs. sample sizes, it can be
seen that, for example see Figure \ref{fig:time1}: (a) the average run time increases with sample size; (b) the average run times based on $\alpha=0.01$ and $\alpha=0.005$ are very close and in all cases are better than $\alpha=0.05$,  while
choosing $\alpha=0.005$ yields a consistently (albeit slightly) lower average run time across all the settings in
the current simulation; (c) generally, the average run time for the PC-like algorithm is better than that for the decomposition-based algorithm. One possible justification is related to the details of the implementation. The PC algorithm implementation in the pcalg R package is very well optimized, while we have not concentrated on optimizing our implementation of the LCD algorithm; therefore the comparison on run time may be unfair to the new algorithm.  For future work, one may consider both optimization of the LCD implementation and  instrumentation of the code to allow counting characteristic operations and therefore reducing the dependence of run-time comparison on program optimization.   The simulations were run on an Intel(R) Core(TM) i7-7700HQ  CPU @ 2.80GHz. An R language package that implements our algorithms is available in the \href{https://www.dropbox.com/sh/iynnlwyu8il7m3v/AACk8SyIEn7s-W9NRlLnz0DDa?dl=0}{supplementary document} \citep{jv3}.
\begin{figure}
	\centering
	\includegraphics[scale=.21,page=9]{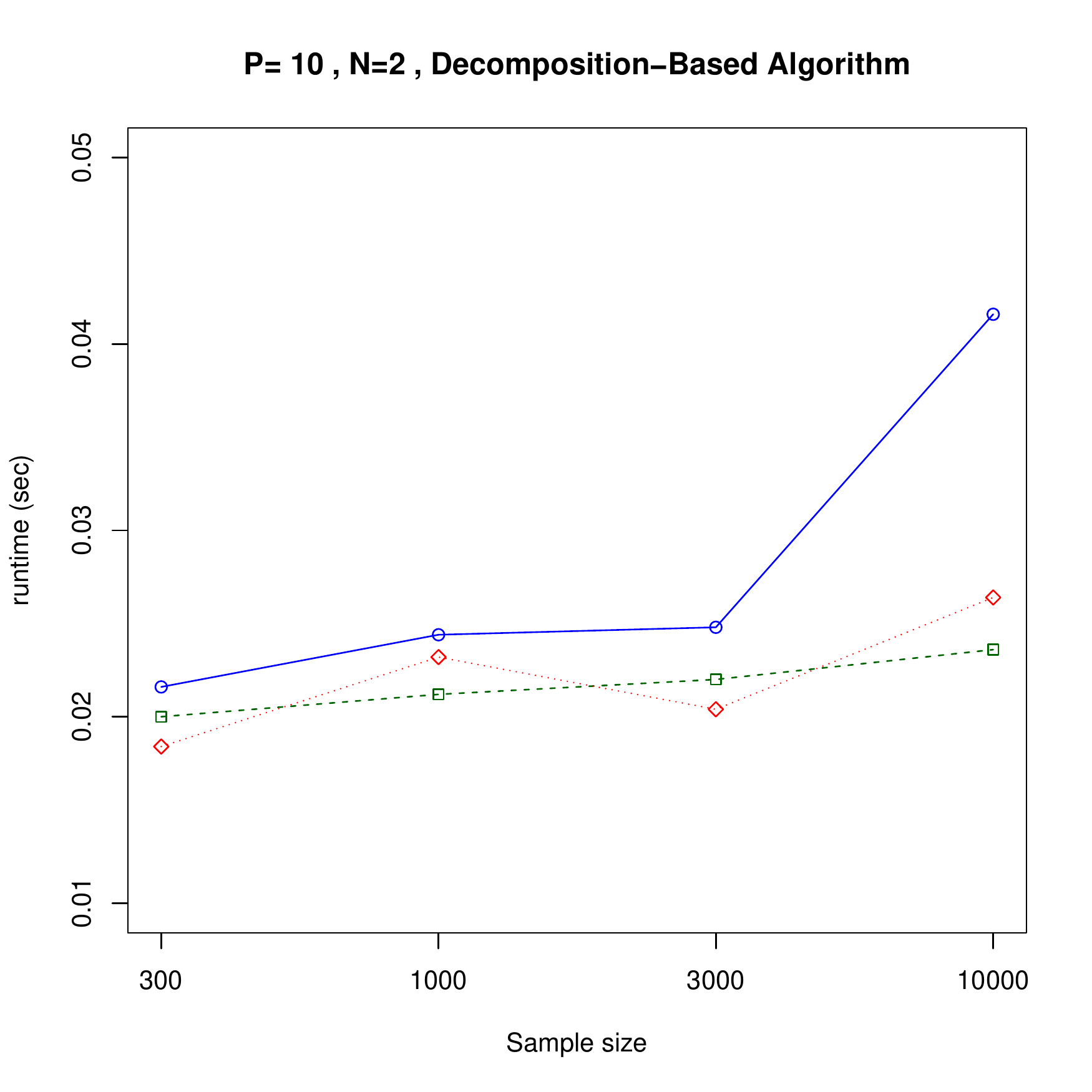}
	\includegraphics[scale=.21,page=9]{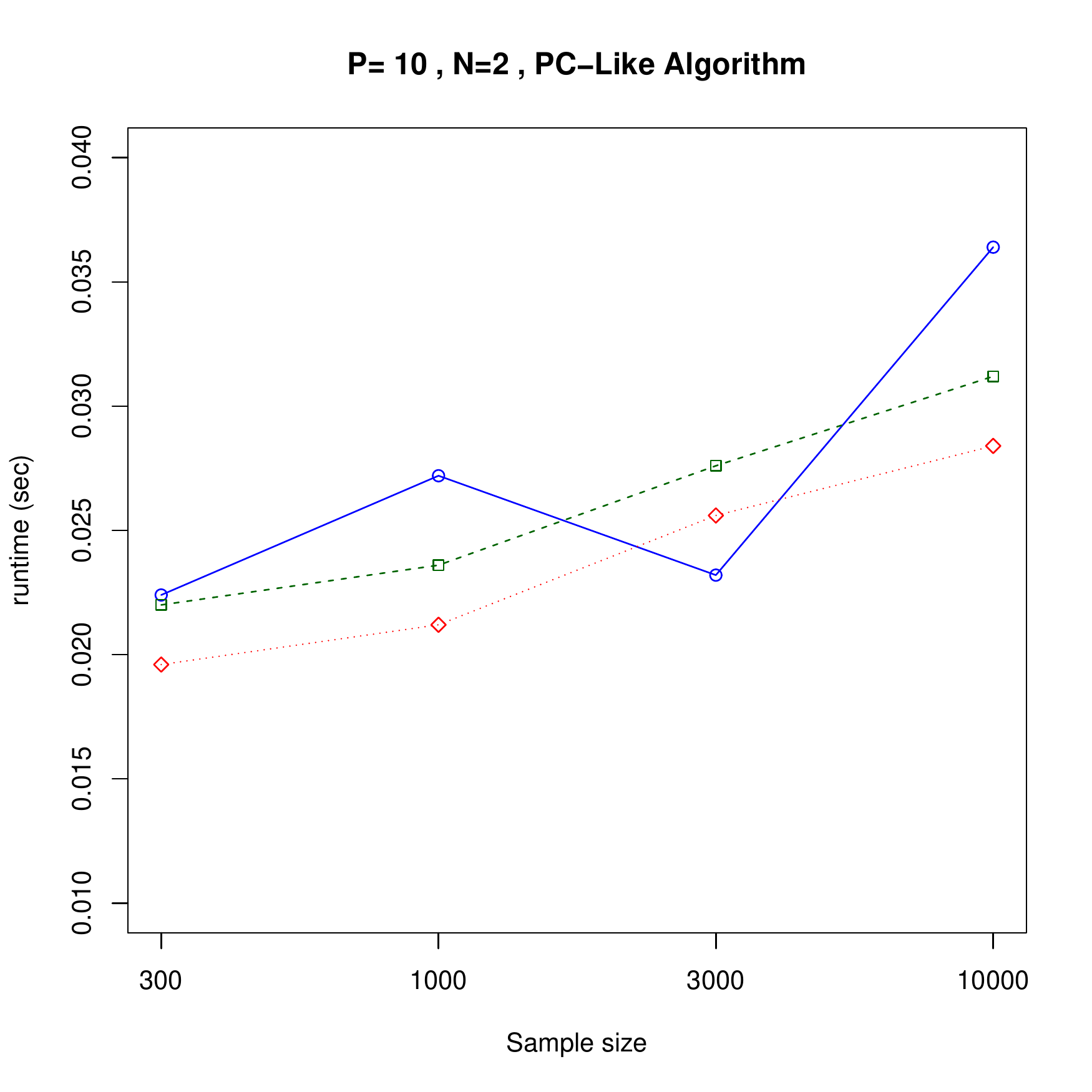}
	\includegraphics[scale=.21,page=9]{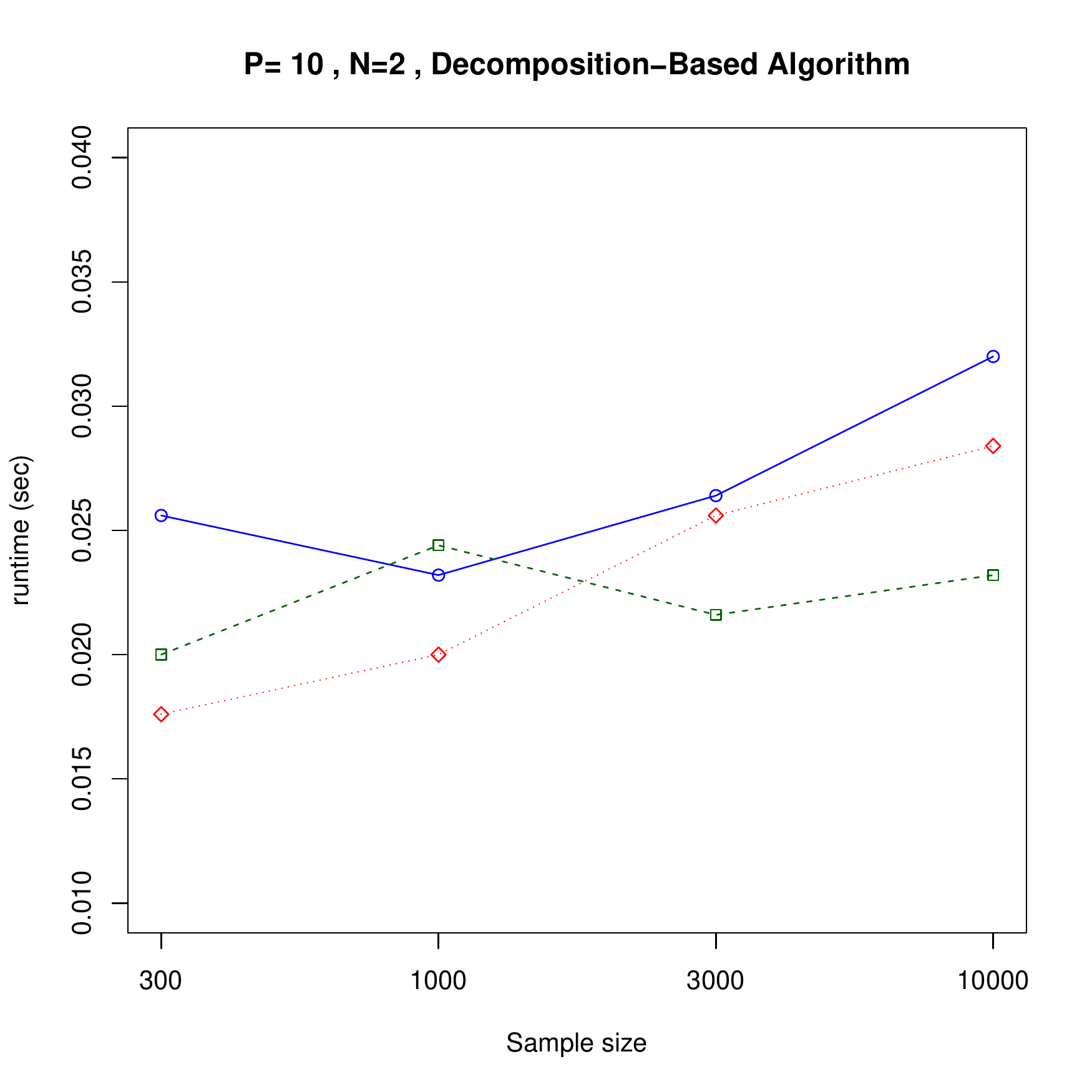}
	\includegraphics[scale=.21,page=9]{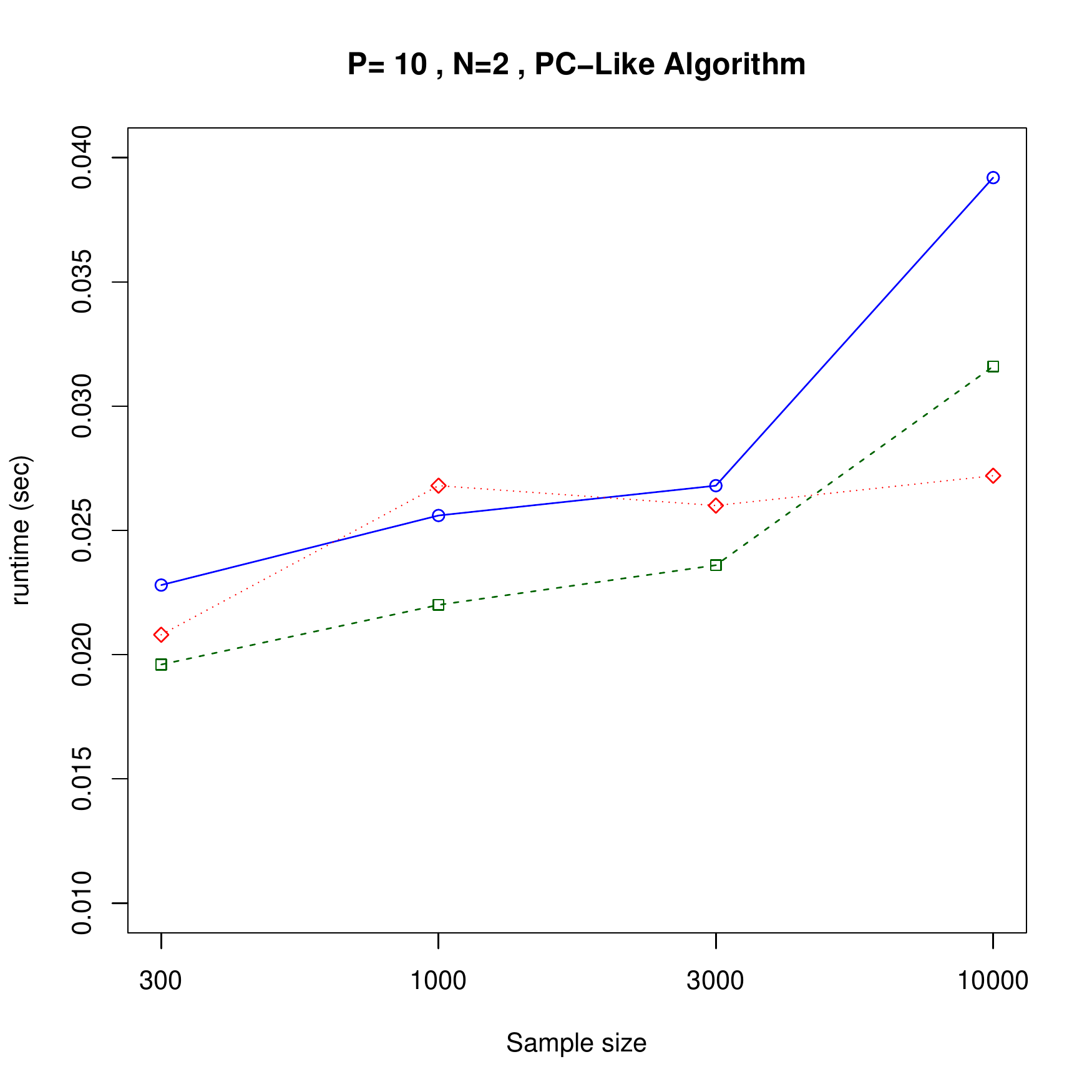}
	\caption{Running times of the decomposition-based and PC-like algorithms for randomly generated Gaussian chain graph models:
		average over 25 repetitions with 30 variables  correspond to N = 2. The first two columns correspond to the essential recovery algorithm while the last two columns correspond to the minimum bidirected recovery respectively. In each plot, the solid (blue)/dashed (green)/dotted (red) lines correspond to significance
		levels $\alpha=0.05/0.01/0.005$.}
	\label{fig:time1}
\end{figure}

It is worth noting that since our implementation of the decomposition-based algorithms is based on the LCD R package, the generated normal random samples from a given MVR chain graph is not guaranteed to be faithful to it. So, one can expect a better performance if we only consider faithful probability distributions in the experiments. Also, the LCD R package uses ${\chi}^2$ test which is an asymptotic test for $G^2$ \citep{mxg}. Again, one can expect a better results if we replace the asymptotic test used in the LCD R package with an exact test. However, there is a trade-off between accuracy and computational time \citep{mxg}.

\subsection{Performance on Discrete Bayesian Networks}
Bayesian networks are special cases of MVR chain graphs. It is of
interest to see whether the decomposition-based algorithms still work well when the data are actually generated
from a Bayesian network. For this purpose, in this subsection, we perform simulation studies for four well-known Bayesian networks from \href{http://www.bnlearn.com/bnrepository/}{Bayesian Network Repository} (Figures \ref{fig:asia}, \ref{fig:insurance}, \ref{fig:alarm}, and \ref{fig:hailfinder}):
\begin{itemize}
    \item ASIA \citep{asia}: with 8 nodes, 8 edges, and 18 parameters, it describes the diagnosis of a patient at a chest clinic who may have just come back from a trip to Asia and may be showing dyspnea. Standard learning algorithms are not able to recover the true structure of the network because of the presence of a functional node (either, representing  logical or)\footnote{\href{https://cran.r-project.org/web/packages/bnlearn/bnlearn.pdf}{Package 'bnlearn'}}. 
    \item INSURANCE \citep{insurance}: with 27 nodes, 52 edges, and 984 parameters, it evaluates car insurance risks.
    \item ALARM \citep{alarm}: with 37 nodes, 46 edges and 509 parameters, it was designed by medical experts to provide an alarm message system for intensive care unit patients based on the output a number of vital signs monitoring devices.
    \item HAILFINDER \citep{Hailfinder}: with 56 nodes, 66 edges, and 2656 parameters, it was designed to forecast severe summer hail in northeastern Colorado.
\end{itemize}

We compare the performance of our algorithms against the PC-like algorithm for these Bayesian networks for three different significance levels $(\alpha=0.05/0.01/0.005)$.

The results of all learning methods are summarized in Table \ref{asia}, \ref{insurance}, \ref{alarm}, and \ref{hailfinder}. 
For the decomposition-based methods, all the three error measures: TPR, FPR and SHD are similar to those of the PC-like algorithms, but the results indicate that  the decomposition-based method outperforms the PC-like algorithms as the size of Bayesian network become larger, especially in terms of TPR and SHD. 

\section{Discussion and Conclusion}\label{discussion}
In this paper, we presented a computationally feasible algorithm for learning the structure of MVR chain graphs via decomposition. We compared the performance of our algorithm with that of the PC-like algorithm proposed by \citep{sp}, in the Gaussian and discrete cases. The PC-like algorithm is a constraint-based algorithm that learns the structure of the underlying MVR chain graph in four steps: (a) determining the skeleton: the resulting undirected graph in this phase contains an undirected edge $u-v$ iff there is no set $S\subseteq V\setminus\{u,v\}$ such that $u\!\perp\!\!\!\perp v|S$; (b) determining the v-structures (unshielded colliders); (c) orienting some of the undirected/directed edges into directed/bidirected edges according to a set of rules applied iteratively; (d) transforming the resulting graph in the previous step into an MVR CG. The essential recovery algorithm obtained after step (c) contains all directed and bidirected edges that are present in every MVR CG of the same Markov equivalence class.  The decomposition-based algorithm is also a constraint-based algorithm that is based on a divide and conquer approach and contains four steps: (a) determining the skeleton by a divide-and-conquer approach; (b) determining the v-structures (unshielded colliders) with localized search for $m$-separators; continuing with steps (c) and (d) exactly as in the PC-like algorithm. The correctness of both algorithms lies upon the assumption that the probability distribution $p$ is faithful
to some MVR CG.  
As for the PC-like algorithms, unless the probability distribution $p$ of the data is faithful to some MVR CG the learned CG cannot be ensured to factorize $p$ properly. Empirical simulations in the Gaussian case show that both algorithms yield good results when the underlying graph is sparse.  The decomposition-based algorithm achieves
competitive results with the PC-like learning algorithm in both Gaussian and discrete cases. 
In fact, the decomposition-based method usually outperforms the PC-like algorithm in all four error measures i.e., TPR, FPR, ACC, and SHD.
Such simulation results confirm that our method is reliable both when latent variables are present (and the underlying graph is an MVR CG) and when there are no such variables (and the underlying graph is a DAG.  The algorithm works reliably when latent variables are present and only fails when selection bias variables are presents. Our algorithm allows relaxing half of the causal sufficiency assumption, because only selection bias needs to be represented explicitly. Since our implementation of the decomposition-based algorithm is based on the LCD R package, with fixed
number of samples, one can expect a better performance if we replace the asymptotic test used in the LCD R package with an exact test. However, there is a trade-off between accuracy and computational time. Also, one can expect a better results if we only consider faithful probability distributions in the experiments. 

The natural continuation of the work presented here would be to develop a learning algorithm with weaker assumptions than the one presented. This could for example be a learning
algorithm that only assumes that the probability distribution satisfies the composition property. It should be mentioned that \citep{psn} developed an algorithm for learning LWF CGs under the composition property. However, \citep{Addendum} proved that the same technique cannot be used for MVR chain graphs. 
We believe that our approach is extendable to the structural learning of AMP chain graphs \citep{amp}. So, the natural continuation of the work presented here would be to develop a learning algorithm via decomposition for AMP chain graphs under the faithfulness assumption.

\begin{table}\centering
\begin{tabular}{c|c|c|c|c}
 & TPR & FPR&ACC& SHD\\
\midrule
    & 0.625&	0.2&	0.75&	9\\
   Decomposition-Based essential recovery algorithm  & 0.625&	0.2&	0.75&	9\\
    &0.625&	0.2&	0.75&	9\\
    \midrule
    &0.625&	0&	0.893&	6\\
  PC-Like essential recovery algorithm Algorithm  &0.625&	0&	0.893&	6 \\
  &0.625&	0&	0.893&	6\\
\midrule
&0.625&	0.2&	0.75&	8\\
Decomposition-Based Algorithm with Minimum bidirected Edges & 0.625&	0.2&	0.75&	7 \\
&0.625&	0.2&	0.75&	8 \\
\midrule
&0.625&	0&	0.893&	4\\
PC-Like Algorithm with Minimum bidirected Edges &0.625&	0&	0.893&	4\\
&0.625&	0&	0.893&	4\\
\bottomrule
\end{tabular}
\caption{Results for discrete samples from the ASIA network. Each row corresponds to the significance
level: $\alpha=0.05/0.01/0.005$ respectively.}\label{asia}
\end{table}

\begin{table}\centering
\begin{tabular}{c|c|c|c|c}
 & TPR & FPR&ACC& SHD\\
\midrule
    &0.635&	0.0167&	0.932&	31\\
   Decomposition-Based essential recovery algorithm   & 0.635&	0.020&	0.926&	32\\
    &0.654&	0.0134&	0.937&	28\\
    \midrule
    &0.558&	0&	0.934&	37\\
  PC-Like essential recovery algorithm Algorithm  &0.519&	0&	0.929&	37\\
  &0.519&	0&	0.929&	37\\
\midrule
&0.635&	0.0167&	0.932&	30\\
Decomposition-Based Algorithm with Minimum bidirected Edges & 0.635&	0.020&	0.926&	32 \\
&0.654&	0.0134&	0.937&	27 \\
\midrule
&0.558&	0&	0.934&	27\\
PC-Like Algorithm with Minimum bidirected Edges &0.519&	0&	0.929&	29\\
&0.519&	0&	0.929&	29\\
\bottomrule
\end{tabular}
\caption{Results for discrete samples from the INSURANCE network. Each row corresponds to the significance
level: $\alpha=0.05/0.01/0.005$ respectively.}\label{insurance}
\end{table}

\begin{table}\centering
\begin{tabular}{c|c|c|c|c}
 & TPR & FPR&ACC& SHD\\
\midrule
    &0.783&	0.0194&	0.967&34\\
   Decomposition-Based essential recovery algorithm  &0.783&	0.0161&	0.967&32\\
    &0.761&	0.021&	0.964&	36\\
    \midrule
    &0.457&	0&	0.962&	38\\
  PC-Like essential recovery algorithm Algorithm  &0.435&	0&	0.961&	38\\
  &0.413&	0&	0.959&	41\\
\midrule
&0.783&	0.0194&	0.967&30\\
Decomposition-Based Algorithm with Minimum bidirected Edges &0.783&	0.0161&	0.967&28 \\
&0.761&	0.021&	0.964&	35\\
\midrule
&0.457&	0&	0.962&	33\\
PC-Like Algorithm with Minimum bidirected Edges &0.435&	0&	0.961&	33\\
&0.413&	0&	0.959&	36\\
\bottomrule
\end{tabular}
\caption{Results for discrete samples from the ALARM network. Each row corresponds to the significance
level: $\alpha=0.05/0.01/0.005$ respectively.}\label{alarm}
\end{table}

\begin{table}\centering
\begin{tabular}{c|c|c|c|c}
 & TPR & FPR&ACC& SHD\\
\midrule
    &0.758&	0.003&	0.986&	26\\
   Decomposition-Based essential recovery algorithm  &0.742&	0.002&	0.987&	24\\
    &0.757&	0.002&	0.988&	22\\
    \midrule
    &0.457&	0&	0.962&	38\\
  PC-Like essential recovery algorithm Algorithm  &0.515&	0.0007&	0.979&	40\\
  &0.515&	0.0007&	0.979&	40\\
\midrule
&0.758&	0.003&	0.986&	42\\
Decomposition-Based Algorithm with Minimum bidirected Edges &0.742&	0.002&	0.987&41 \\
&0.757&	0.002&	0.988&	24\\
\midrule
&0.457&	0&	0.962&	38\\
PC-Like Algorithm with Minimum bidirected Edges &0.515&	0.0007&	0.979&	38\\
&0.515&	0.0007&	0.979&39\\
\bottomrule
\end{tabular}
\caption{Results for discrete samples from the HAILFINDER network. Each row corresponds to the significance
level: $\alpha=0.05/0.01/0.005$ respectively.}\label{hailfinder}
\end{table}

\begin{figure}
    \centering
    \includegraphics[scale=.25]{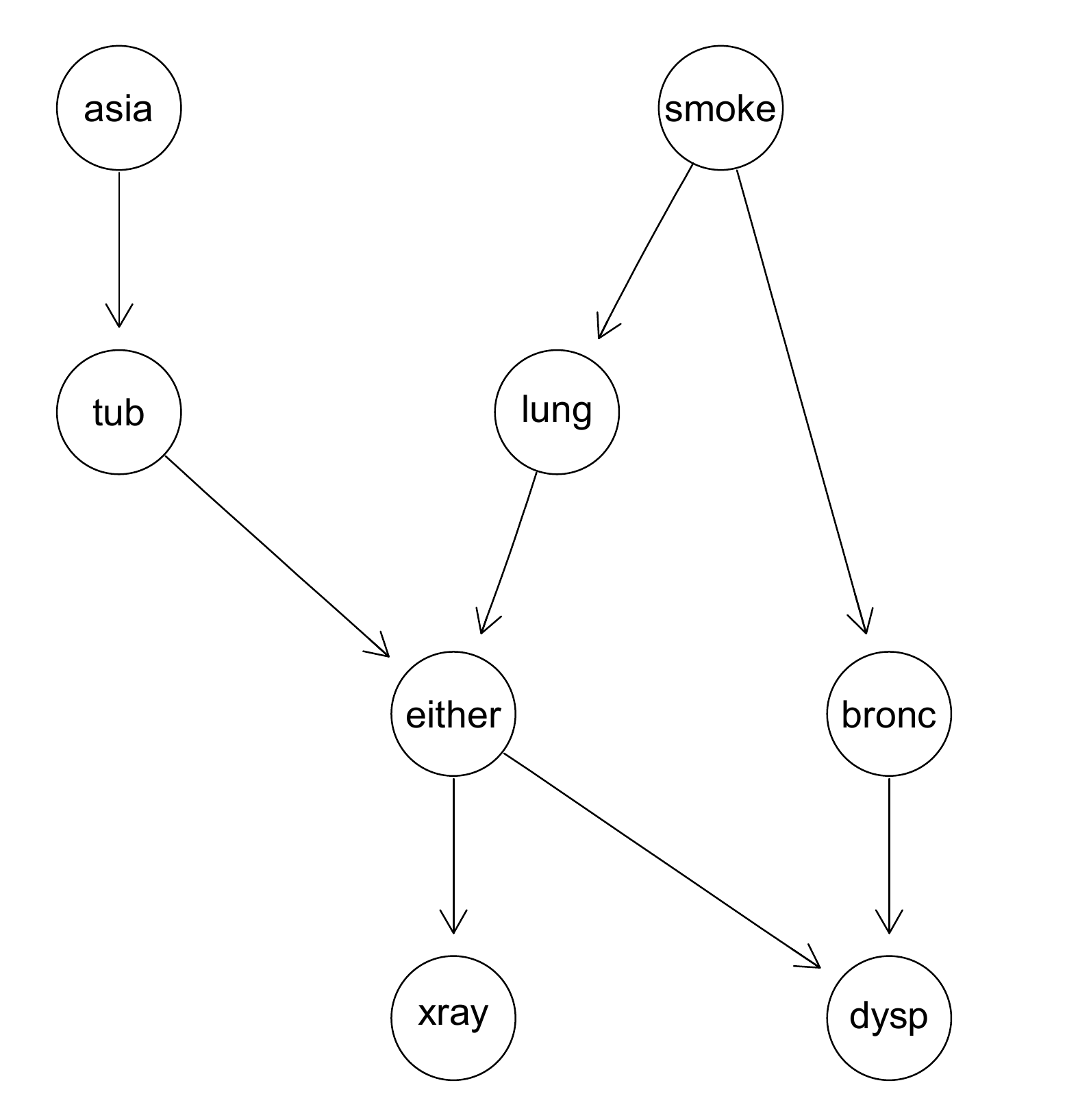}
    \caption{\href{http://www.bnlearn.com/bnrepository/}{ASIA (sometimes called LUNG CANCER or CHEST CLINIC)} ,
Number of nodes: 8,
Number of arcs: 8,
Number of parameters: 18,
Average Markov blanket size: 2.50,
Average degree: 2.00,
Maximum in-degree: 2.}
    \label{fig:asia}
\end{figure}

\begin{figure}
    \centering
    \includegraphics[scale=.5]{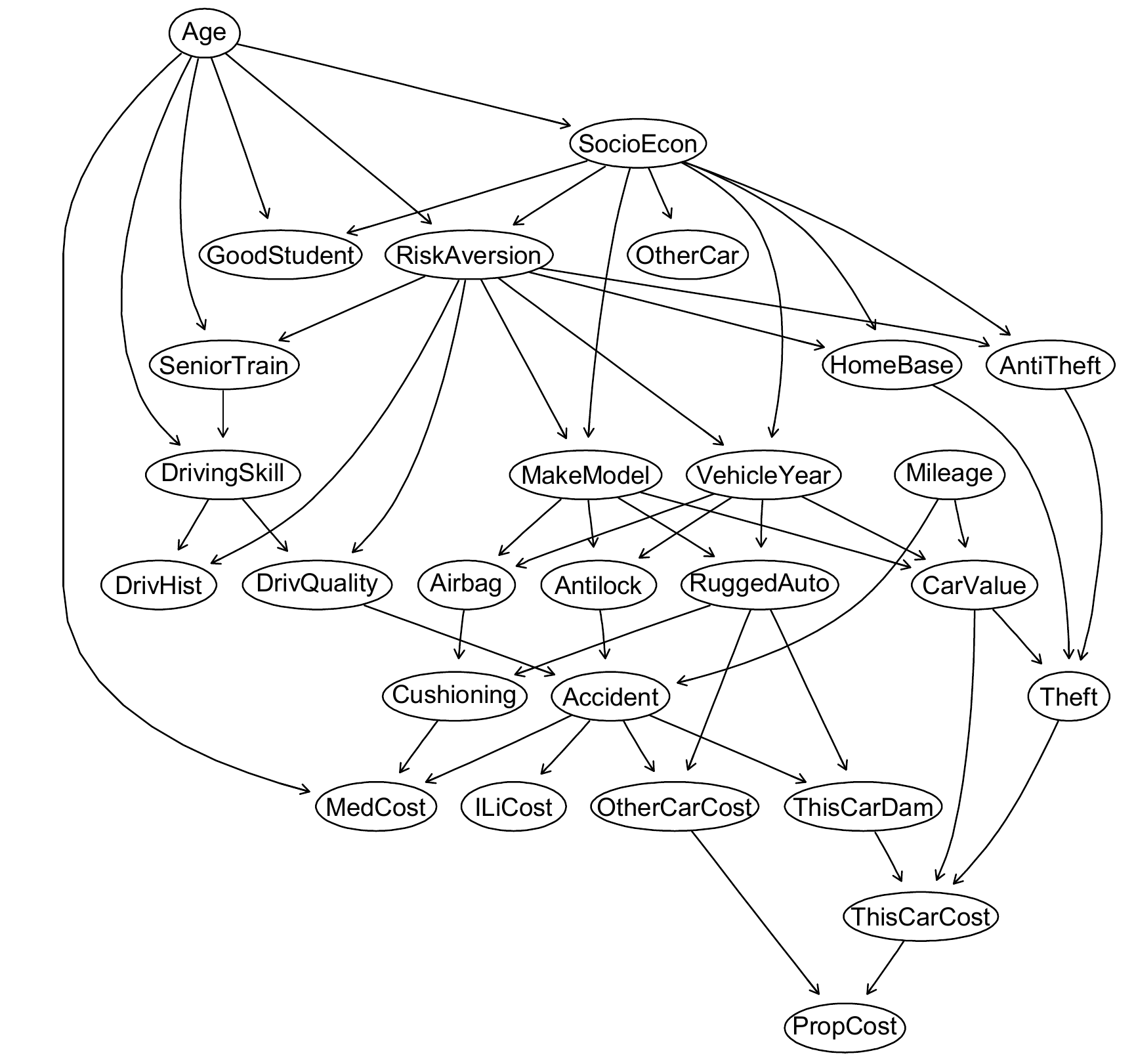}
    \caption{\href{http://www.bnlearn.com/bnrepository/}{INSURANCE} ,
Number of nodes: 27,
Number of arcs: 52,
Number of parameters: 984,
Average Markov blanket size: 5.19,
Average degree: 3.85
Maximum in-degree: 3.}
    \label{fig:insurance}
\end{figure}

\begin{figure}
    \centering
    \includegraphics[scale=.5]{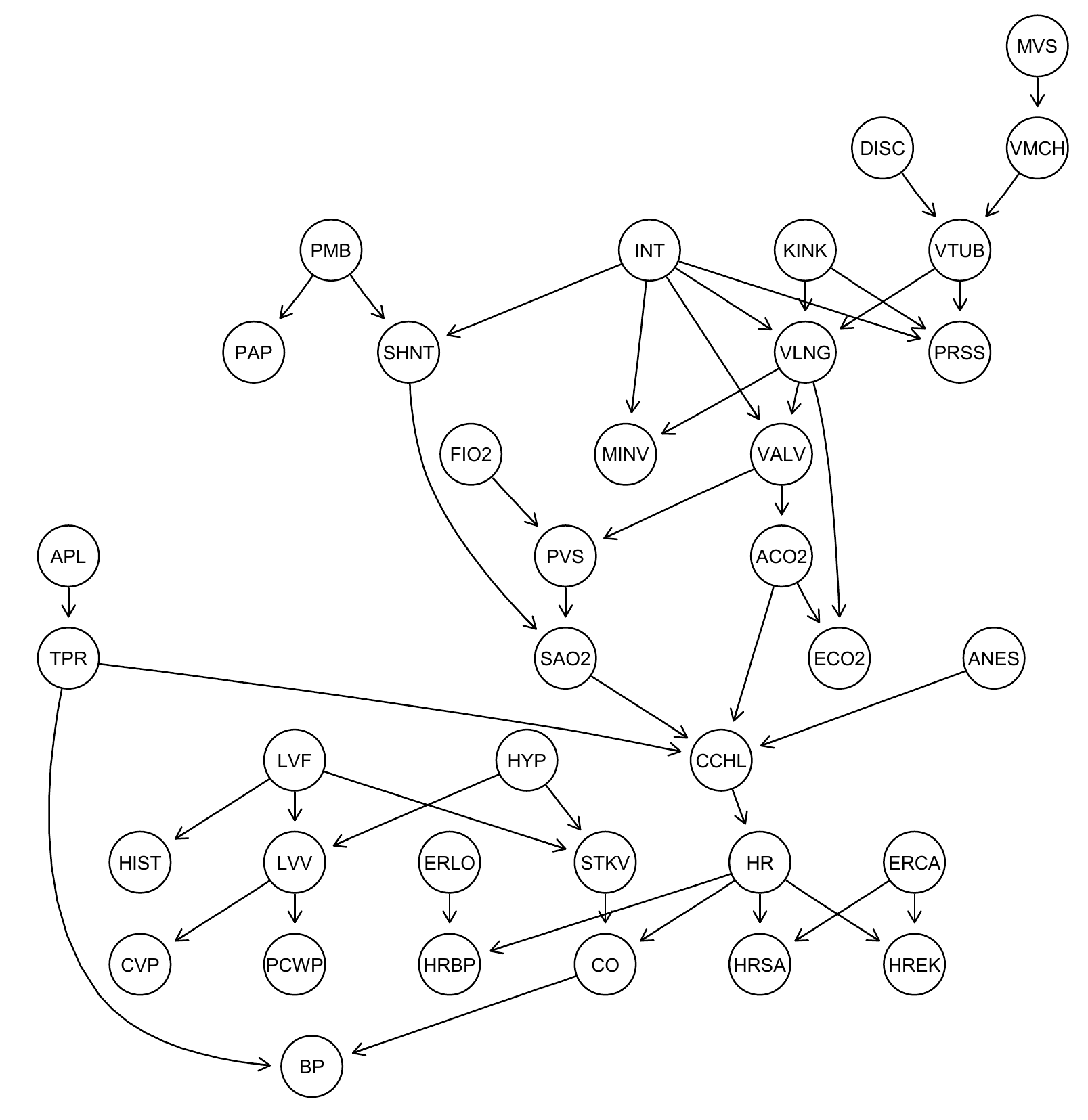}
    \caption{\href{http://www.bnlearn.com/bnrepository/}{ALARM} ,
Number of nodes: 37,
Number of arcs: 46,
Number of parameters: 509,
Average Markov blanket size: 3.51,
Average degree: 2.49,
Maximum in-degree: 4.}
    \label{fig:alarm}
\end{figure}

\begin{figure}
    \centering
    \includegraphics[scale=.9]{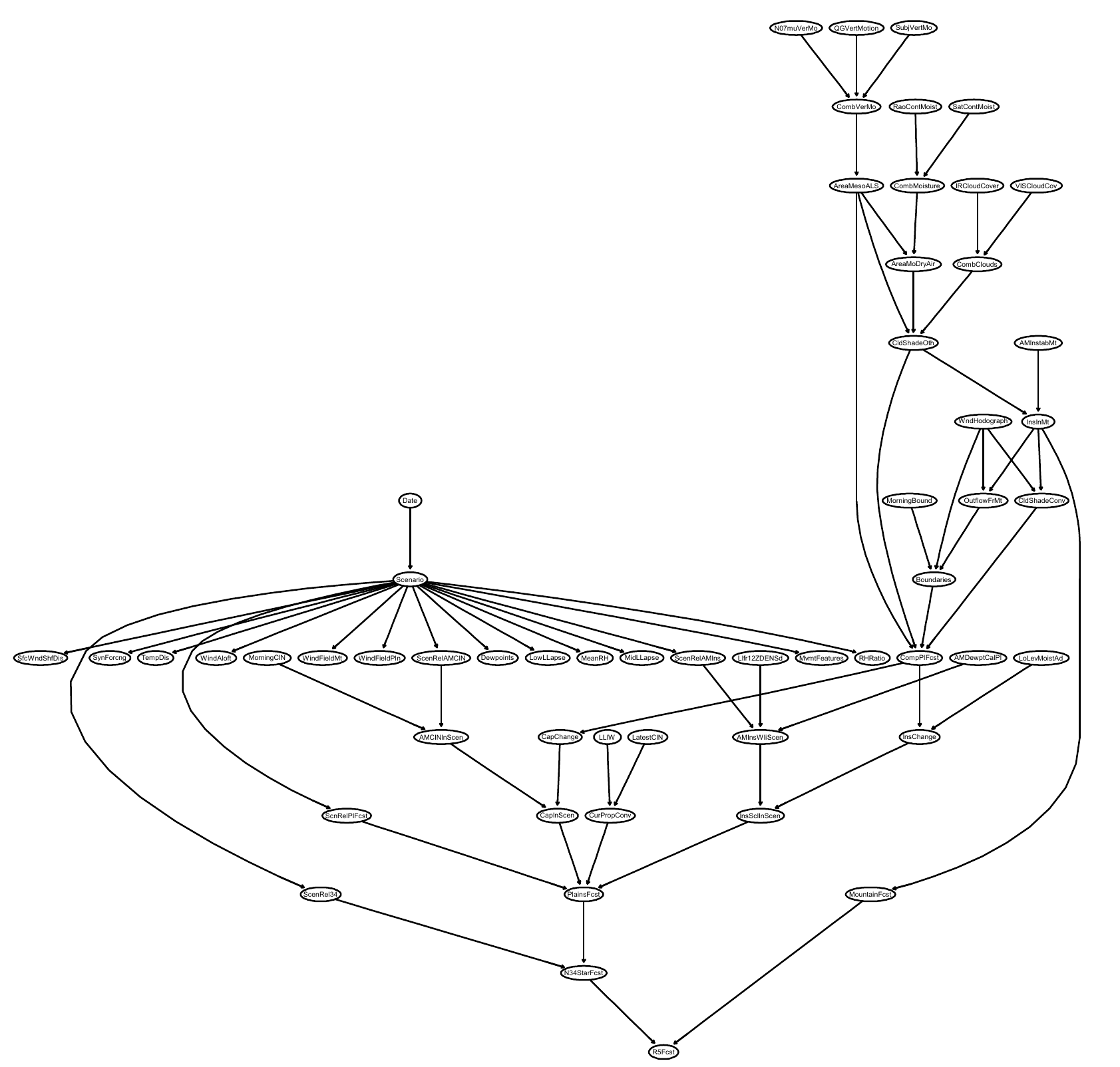}
    \caption{\href{http://www.bnlearn.com/bnrepository/}{HAILFINDER} ,
Number of nodes: 56,
Number of arcs: 66,
Number of parameters: 2656,
Average Markov blanket size: 3.54,
Average degree: 2.36
Maximum in-degree: 4.}
    \label{fig:hailfinder}
\end{figure}

\section*{Appendix A. Proofs of Theoretical Results}

\begin{lemma}\label{lem1}
	Let $\rho$ be a chain from $u$ to $v$, and $W$ be the set of all vertices on $\rho$ ($W$ may or may not contain $u$ and $v$).
	Suppose that (the endpoints of) a chain $\rho$ is (are) blocked by $S$. If $W\subseteq S$, then the chain $\rho$ is blocked by $W$ and by any set containing $W$.
\end{lemma}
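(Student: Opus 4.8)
The plan is to argue directly from the definition of $m$-connection, using the hypothesis $W\subseteq S$ together with the fact that $W$ contains every vertex of $\rho$. The crucial observation is that once all vertices of $\rho$ lie inside the conditioning set, no collider of $\rho$ can be the cause of the block, so the block must be produced by a \emph{noncollider}; and a noncollider, being an interior vertex of $\rho$, automatically belongs to $W$ and hence to every superset of $W$.

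First I would unwind the definitions. Working under the standard $m$-separation convention that the endpoints satisfy $u,v\notin S$, the assumption that $\rho$ is blocked by $S$ means that $\rho$ is not $m$-connecting given $S$, so at least one of the two defining conditions must fail: either some noncollider of $\rho$ lies in $S$, or some collider $c$ of $\rho$ lies outside $An_G(S)$. I would then eliminate the second possibility: every collider $c$ is one of the vertices of $\rho$, hence $c\in W\subseteq S\subseteq An_G(S)$ since $An_G(S)\supseteq S$. Therefore condition (ii) cannot be violated relative to $S$, and the block must arise from condition (i); that is, there exists a noncollider $\zeta$ of $\rho$ with $\zeta\in S$, and in particular $\rho$ has at least one noncollider.

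The second step transfers this obstruction to $W$ and to its supersets. Since $\zeta$ is a noncollider it is a nonendpoint vertex of $\rho$, so $\zeta\in W$ by the definition of $W$, and consequently $\zeta\in T$ for every set $T$ with $W\subseteq T$. A noncollider lying in the conditioning set already violates condition (i) of $m$-connection, so $\rho$ is not $m$-connecting given $T$; that is, $\rho$ is blocked by $T$. Taking $T=W$ gives the first assertion and an arbitrary $T\supseteq W$ gives the ``any set containing $W$'' claim. The only delicate step, which I would flag, is the bookkeeping around the endpoints: the definition of $m$-connection presumes the endpoints lie outside the conditioning set, so I would make explicit that the argument is carried out with $u,v\notin S$ and note that whether or not $W$ is taken to contain $u$ and $v$ is immaterial, since the decisive vertex $\zeta$ is always interior to $\rho$.
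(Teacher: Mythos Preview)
Your proposal is correct and follows essentially the same approach as the paper: both arguments observe that, because every collider on $\rho$ lies in $W\subseteq S\subseteq An_G(S)$, the block must be due to a noncollider $\zeta$ on $\rho$, and since $\zeta\in W$ this obstruction persists for $W$ and for every superset of $W$. Your write-up is in fact more explicit than the paper's, which compresses the first step into the remark that ``the blocking of $\rho$ depends on those vertices between $u$ and $v$ that are contained in the $m$-separator'' and hence on $S\cap W=W$.
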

\begin{proof}
	Since the blocking of the chain $\rho$ depends on those vertices between $u$ and $v$ that are contained in the $m$-separator, 
	and since $W$ contains all vertices on $\rho$, $\rho$ is also blocked by $S \cap W = W$ if $\rho$ is blocked by $S$. Since all colliders on $\rho$
	have already been activated conditionally on $W$, adding other vertices into the conditional set does not make any
	new collider active on $\rho$. This implies that $\rho$ is blocked by any set containing $W$.	
\end{proof}
\begin{lemma}\label{lem2}
	Let $T$ be an $m$-separation tree for  CG $G$, and $K$ be a separator of $T$ that separates $T$ into two
	subtrees $T_1$ and $T_2$ with variable sets $V_1$ and $V_2$ respectively. Suppose that $\rho$ is a chain from $u$ to $v$ in $G$ where $u\in V_1\setminus K$ and $v\in V_2\setminus K$. Let $W$ denote the set of all vertices on $\rho$ ($W$ may or may not contain $u$ and $v$). Then the
	chain $\rho$ is blocked by $W\cap K$ and by any set containing $W\cap K$. 
\end{lemma}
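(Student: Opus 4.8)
The plan is to reduce both assertions to a single structural fact: the chain $\rho$ must contain a \emph{noncollider} that lies in $K$. Once such a vertex $w^\ast \in W \cap K$ is produced, both claims are immediate, since $w^\ast$ is a noncollider on $\rho$ with $w^\ast \in W \cap K \subseteq Z$ both for $Z = W\cap K$ and for every $Z \supseteq W \cap K$; hence condition (i) in the definition of an $m$-connecting chain fails and $\rho$ is blocked by $W\cap K$ and by any set containing it. So the whole lemma rests on exhibiting a noncollider of $\rho$ inside $K$.

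First I would record the single place where the hypothesis enters. Since $K$ is a separator of the $m$-separation tree $T$, Definition~\ref{septree} gives $\langle V_1\setminus K, V_2\setminus K \mid K\rangle_G$, and because $u\in V_1\setminus K$, $v\in V_2\setminus K$ this says $u$ and $v$ are $m$-separated given $K$; in particular every chain between them, $\rho$ included, is blocked by $K$. Moreover, any two vertices lying one in $V_1\setminus K$ and the other in $V_2\setminus K$ are $m$-separated, hence non-adjacent, so no edge of $\rho$ can pass directly between the two sides and $\rho$ must meet $K$.

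The core step is to show, by contradiction, that some vertex of $W\cap K$ is a noncollider. Assume instead that every vertex of $W\cap K$ is a collider on $\rho$. Call a collider of $\rho$ \emph{active} if it lies in $An_G(K)$ and \emph{inactive} otherwise; as $K\subseteq An_G(K)$, every vertex of $W\cap K$ is an active collider, so all inactive colliders lie in $(V_1\setminus K)\cup(V_2\setminus K)$. Under the assumption the only vertices that can block $\rho$ given $K$ are its inactive colliders, so cutting $\rho$ at the inactive colliders $c_1,\dots,c_r$ (in the order met from $u$) yields sub-chains whose interiors contain only active colliders and noncolliders outside $K$; each such sub-chain is therefore $m$-connecting given $K$. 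If $r=0$ then $\rho$ itself is $m$-connecting given $K$, contradicting that it is blocked. Otherwise the piece from $u$ to $c_1$ is $m$-connecting given $K$, so $u$ and $c_1$ are $m$-connected given $K$; as $c_1\notin K$, the separation $\langle V_1\setminus K, V_2\setminus K\mid K\rangle_G$ forces $c_1\in V_1\setminus K$. Applying the same reasoning to each consecutive pair $c_i,c_{i+1}$ (whose connecting sub-chain is $m$-connecting given $K$, and neither of which lies in $K$) shows they cannot sit on opposite sides, so by induction every $c_i$, and in particular $c_r$, lies in $V_1\setminus K$. But the final piece from $c_r$ to $v$ is $m$-connecting given $K$, which would $m$-connect $c_r\in V_1\setminus K$ with $v\in V_2\setminus K$ given $K$ --- contradicting the separation. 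Hence a noncollider of $\rho$ lies in $K$.

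I expect the delicate point to be the bookkeeping of the core step rather than any single inequality. The monotonicity $An_G(W\cap K)\subseteq An_G(K)$, which is what also makes the plain ``blocked by $W\cap K$'' direction go through directly (in the spirit of Lemma~\ref{lem1}), is routine. The real content is the propagation argument, and the care needed there is to verify that the pieces obtained by cutting at inactive colliders are genuinely $m$-connecting given $K$ --- i.e.\ that every interior vertex is either an active collider (hence in $An_G(K)$) or a noncollider outside $K$ (hence not in the conditioning set) --- and that each $c_i$ is a nonendpoint of $\rho$, so that the pieces $u,\dots,c_1$ and $c_r,\dots,v$ are nondegenerate.
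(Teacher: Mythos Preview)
Your argument is correct, but it takes a genuinely different route from the paper. The paper's proof is local: it observes that since no edge of $G$ can join $V_1\setminus K$ to $V_2\setminus K$, the chain $\rho$ must contain a sub-chain $\rho'=(s,t,\dots,x,y)$ with $s\in V_1\setminus K$, $y\in V_2\setminus K$, and \emph{all} interior vertices $t,\dots,x$ lying in $K$. Then $K$ blocks $\rho'$ (by the defining property of an $m$-separation tree), and since every interior vertex of $\rho'$ already lies in $K$, Lemma~\ref{lem1} gives that $\rho'$, and hence $\rho$, is blocked by the interior vertex set $W'\subseteq W\cap K$ and by any superset. In effect the paper isolates a single ``pure $K$ segment'' and lets Lemma~\ref{lem1} do the work; implicitly this also produces a noncollider in $K$, since all colliders in $W'\subseteq K$ are automatically in $An_G(K)$.

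Your approach instead argues globally: assume no noncollider of $\rho$ lies in $K$, cut $\rho$ at the colliders not in $An_G(K)$, and propagate the side of the separator along the resulting $m$-connecting pieces until a contradiction with $\langle V_1\setminus K, V_2\setminus K\mid K\rangle_G$ appears. This is sound and has the virtue of being self-contained (you do not invoke Lemma~\ref{lem1} as a black box), but it is longer than necessary. The paper's route is shorter precisely because the existence of a contiguous $K$-segment between two opposite-side vertices is immediate once adjacency across the separator is ruled out; you might note that your propagation argument can be collapsed to this single observation.
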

\begin{proof}
	Since $u\in V_1\setminus K$ and $v\in V_2\setminus K$, there is a sequence from $s$ (may be $u$) to $y$ (may be $v$) in $\rho =(u,\dots,s,t,\dots,x,y,\dots,v)$ such that $s\in V_1\setminus K$ and $y\in V_2\setminus K$ and all vertices from $t$ to $x$ are contained in $K$. Let $\rho'$ be the sub-chain of $\rho$ from $s$ to $y$ and $W'$ the vertex set from $t$ to $x$, so $W'\subseteq K$. Since $s\in V_1\setminus K$ and $y\in V_2\setminus K$, we have from definition of $m$-separation tree that $K$  $m$-separates $s$ and $y$ in $G$, i.e., $K$ blocks $\rho'$. By lemma \ref{lem1}, we obtain that $\rho'$ is blocked by $W'(\subseteq K)$ and any set containing $W'$. Since $W'\subseteq (K\cap W)$, $\rho'$ is blocked by $K\cap W$ and by any set containing $K\cap W$. Thus $\rho(\supseteq \rho')$ is also blocked by them.
\end{proof}

\begin{remark}\label{rem1}
	Javidian and Valtorta showed that if we find a separator over $S$ in $(G_{An(u\cup v)})^a$ then it is an $m$-separator in $G$. On the other hand, if there exists an $m$-separator over $S$ in $G$ then there must exist a separator over $S$ in $(G_{An(u\cup v)})^a$ by removing all nodes which are not in $An(u\cup v)$ from it \citep{jv2}.
\end{remark} 
Observations in Remark \ref{rem1} yield the following results.

\begin{lemma}\label{lem3}
	Let $u$ and $v$ be two non-adjacent vertices in MVR CG $G$, and let $\rho$ be a chain from $u$ to $v$. If $\rho$ is not contained in
	$An(u\cup v)$, then $\rho$ is blocked by any subset S of $an(u\cup v)$.
\end{lemma}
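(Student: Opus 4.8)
The plan is to exhibit, on the chain $\rho$, a collider that lies outside $An_G(S)$; by condition (ii) in the definition of an $m$-connecting chain, this immediately shows that $\rho$ is not $m$-connecting given $S$, i.e.\ that $\rho$ is blocked by $S$. The first step is a routine observation about ancestor sets: since $S\subseteq an(u\cup v)$, every ancestor of a vertex of $S$ is again an ancestor of $u$ or of $v$ (directed paths compose, and there are no directed cycles), so $An_G(S)\subseteq an(u\cup v)\subseteq An(u\cup v)$. Hence it suffices to produce a collider on $\rho$ that does not belong to $An(u\cup v)$.

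Because $u,v\in An(u\cup v)$ while, by hypothesis, $\rho$ contains at least one vertex outside $An(u\cup v)$, I would single out a maximal run of consecutive vertices $w_1,\dots,w_k$ on $\rho$ with $w_i\notin An(u\cup v)$ for all $i$, and let $a$ and $b$ be the vertices of $\rho$ immediately preceding $w_1$ and immediately following $w_k$. Both $a$ and $b$ exist and lie in $An(u\cup v)$, by maximality of the run and because the endpoints $u,v$ are inside $An(u\cup v)$. The key local fact is that the $a$--$w_1$ edge cannot be oriented $w_1\to a$ and the $w_k$--$b$ edge cannot be oriented $w_k\to b$: such an edge would make $w_1$ (respectively $w_k$) a parent of a vertex in $An(u\cup v)$, hence itself an ancestor of $u\cup v$, contradicting $w_1,w_k\notin An(u\cup v)$. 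Consequently the $a$--$w_1$ edge carries an arrowhead into $w_1$ and the $w_k$--$b$ edge carries an arrowhead into $w_k$.

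The heart of the argument---and the step I expect to require the most care---is to show that the run $w_1,\dots,w_k$ contains a collider of $\rho$. I would argue by contradiction, assuming every $w_i$ is a noncollider on $\rho$. The $a$--$w_1$ edge has an arrowhead at $w_1$; since $w_1$ is a noncollider, its edge on the other side must have a tail at $w_1$, which (edges being either $\to$ or $\leftrightarrow$) forces $w_1\to w_2$ when $k\ge 2$, or forces the $w_1$--$b$ edge to be $w_1\to b$ when $k=1$. Iterating along $w_1,\dots,w_k$ yields the directed sub-chain $w_1\to w_2\to\cdots\to w_k$ and finally the orientation $w_k\to b$. But $w_k\to b$ was excluded in the previous step, a contradiction. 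Hence some $w_j$ is a collider on $\rho$.

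Finally, since $w_j\notin An(u\cup v)$ and $An_G(S)\subseteq An(u\cup v)$, the collider $w_j$ is not in $An_G(S)$, so condition (ii) for an $m$-connecting chain fails and $\rho$ is blocked by $S$. As $S$ was an arbitrary subset of $an(u\cup v)$, the conclusion holds for every such $S$. This argument is a direct consequence of the $m$-separation definition together with the containment $An_G(S)\subseteq An(u\cup v)$, and is in the same spirit as Remark~\ref{rem1}, which relates $m$-separators in $G$ to separators in the augmented graph $(G_{An(u\cup v)})^a$; the hypotheses of Lemmas~\ref{lem1} and \ref{lem2} do not apply directly here, so I would not route the proof through them.
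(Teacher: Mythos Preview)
Your proof is correct and follows essentially the same route as the paper's: isolate a maximal excursion of $\rho$ outside $An(u\cup v)$, observe that the boundary edges carry arrowheads into the excursion, deduce the existence of a collider inside the excursion, and conclude via $An_G(S)\subseteq An(u\cup v)$ that this collider is not activated by $S$. Your write-up spells out more carefully than the paper both the containment $An_G(S)\subseteq An(u\cup v)$ and the inductive orientation argument showing a collider must occur, but the underlying idea is identical.
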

\begin{proof}
	Since $\rho \not\subseteq An(u\cup v)$, there is a sequence from $s$ (may be $u$) to $y$ (may be $v$) in $\rho=(u,\dots,s,t,\dots,x,y,\dots,v)$ such that $s$ and $y$ are contained in $An(u\cup v)$ and all vertices from $t$ to $x$ are out of $An(u\cup v)$.Then the edges $s-t$ and $x-y$ must be oriented as $s\lacircle t$ and $x\racircle y$, otherwise $t$ or $x$ belongs to $an(u\cup v)$. Thus there exist at least one collider between $s$ and $y$ on $\rho$. The middle vertex $w$ of the collider closest to $s$ between $s$ and $y$ is not contained in
	$an(u\cup v)$, and any descendant of $w$ is not in $an(u\cup v)$, otherwise there is a (partially) directed cycle. So $\rho$ is blocked
	by the collider, and it cannot be activated conditionally on any vertex in $S$ where $S\subseteq an(u\cup v)$.
\end{proof}

\begin{lemma}\label{lem4}
	Let $T$ be an $m$-separation tree for  CG $G$. For any vertex $u$ there exists at least one node of $T$ that contains $u$ and $bd(u)$.
\end{lemma}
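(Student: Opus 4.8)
The plan is to prove the statement in two stages: first show that the set $R=\{u\}\cup bd(u)$ can never be ``split'' by any separator of $T$, and then use the junction-tree structure of $T$ to conclude that a set which is never split must lie inside a single node.

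First I would record the elementary facts about the boundary. By definition $bd(u)=pa(u)\cup ne(u)$, so every $w\in bd(u)$ is joined to $u$ by a single edge $w\to u$ or $w\leftrightarrow u$; in either case that edge carries an arrowhead at $u$. Hence $u$ and $w$ are adjacent in $G$, and since a single edge is a chain with neither a noncollider nor a collider, it is $m$-connecting given every conditioning set. Thus $u$ and any $w\in bd(u)$ are \emph{never} $m$-separated. Moreover, for two boundary vertices $w_1,w_2$ the length-two chain from $w_1$ to $w_2$ through $u$ has $u$ as a collider (both incident edges have an arrowhead at $u$), so $w_1$ and $w_2$ are collider connected; equivalently, $R$ is complete in the augmented graph $(G)^a$.

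Next I would show that no separator of $T$ splits $R$. Fix an edge of $T$ with separator $S$ and sides $V_1,V_2$ (so $V_1\cup V_2=V$ and $V_1\cap V_2=S$), and suppose for contradiction that some $r_1\in R\cap(V_1\setminus S)$ and some $r_2\in R\cap(V_2\setminus S)$. By Definition \ref{septree} we have $\langle V_1\setminus S,\,V_2\setminus S\mid S\rangle_G$, so $r_1$ and $r_2$ are $m$-separated by $S$. If one of $r_1,r_2$ equals $u$, this contradicts the fact that $u$ is never $m$-separated from a boundary vertex. Otherwise $r_1,r_2\in bd(u)$, and I would split on the position of $u$: if $u\in V_1\setminus S$ (resp.\ $u\in V_2\setminus S$), then $u$ and $r_2$ (resp.\ $r_1$) lie on opposite sides and are $m$-separated by $S$, again contradicting non-separability of adjacent vertices; and if $u\in S$, then $u\in An_G(S)$, so the collider chain from $r_1$ to $r_2$ through $u$ is $m$-connecting given $S$, contradicting that $r_1$ and $r_2$ are $m$-separated by $S$. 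Hence $R$ lies entirely on one side of every separator, i.e.\ $R\subseteq V_1$ or $R\subseteq V_2$ for each edge.

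Finally I would convert ``never split'' into ``contained in one node.'' Since $T$ is a junction tree it satisfies the running intersection property, so for each $x\in R$ the set $T_x$ of nodes containing $x$ is a connected subtree. If two members $x,y\in R$ never occurred together in a node, the edge on the path between $T_x$ and $T_y$ would place $x\in V_1\setminus S$ and $y\in V_2\setminus S$, splitting $R$, which the previous step forbids; therefore the subtrees $\{T_x\}_{x\in R}$ pairwise intersect, and by the Helly property for subtrees of a tree they share a common node, which then contains all of $R=\{u\}\cup bd(u)$. I expect this last stage to be the main obstacle, since the passage from the purely separation-theoretic ``not split'' condition to membership in a single node is exactly where the tree structure is indispensable, and one must check that a vertex lying in a separator $S$ does not corrupt the side-assignment argument. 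The companion delicate point is the collider case $u\in S$ in the middle step: there adjacency gives nothing, and one genuinely needs $u\in An_G(S)$ to re-activate the chain through $u$.
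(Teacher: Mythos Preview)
Your proof is correct and rests on the same graph-theoretic observation as the paper's: $u$ is a collider on the two-edge chain between any two of its boundary vertices, so any separator $S$ containing $u$ fails to $m$-separate them. The paper's argument is more direct and does not pass through Helly: it picks a node $C$ of $T$ containing $u$ together with as many boundary vertices as possible; if some $w\in bd(u)$ is missing from $C$, it locates another node $C'$ with $\{u,w\}\subseteq C'$, notes that every separator on the $C$--$C'$ path must contain $u$ (since $u$ lies in both endpoints), and obtains the contradiction from the active chain $v\,\circ\!\!\!\rightarrow u\leftarrow\!\!\!\circ\,w$. Your route via the abstract ``no separator splits $R$'' property followed by the Helly property for subtrees is more structural; it handles the case analysis (in particular whether $u\in S$) more explicitly than the paper does, and it makes transparent why adjacency alone suffices when $u\notin S$ while the collider argument is genuinely needed when $u\in S$.

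One point to tighten: you write ``Since $T$ is a junction tree it satisfies the running intersection property,'' but Definition~\ref{septree} does not assert that $T$ is a junction tree, and you need running intersection for each $T_x$ to be a connected subtree (and hence for Helly to apply). This is not free but it does follow from the $m$-separation condition: if $x$ belonged to nodes on both sides of some edge of $T$ yet $x\notin S$, then $x\in (V_1\setminus S)\cap(V_2\setminus S)$, and the requirement $\langle V_1\setminus S,\,V_2\setminus S\mid S\rangle_G$ would force $x$ to be $m$-separated from itself, which is impossible. With that one sentence inserted your argument is complete. The paper leans on the same fact (``all separators must contain $u$, otherwise they cannot separate $C$ from $C'$'') without spelling out the justification either.
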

\begin{proof}
	If $bd(u)$ is empty, it is trivial. Otherwise let $C$ denote the node of $T$ which contains $u$ and the most elements
	of $u$'s boundary. Since no set can separate $u$ from a parent (or neighbor), there must be a node of $T$ that contains $u$ and the parent (or neighbor). If $u$ has only
	one parent (or neighbor), then we obtain the lemma. If $u$ has two or more elements in its boundary, we choose two arbitrary elements $v$ and $w$ of $u$'s boundary that are not contained in a single node but are contained in two different nodes of $T$,
	say $\{u,v\}\subseteq C$ and  $\{u,w\}\subseteq C'$ respectively, since all vertices in $V$ appear in $T$. On the chain from $C$ to $C'$ in $T$, all
	separators must contain $u$, otherwise they cannot separate $C$ from $C'$. However, any separator containing $u$ cannot
	separate $v$ and $w$ because $v\lacircle u\racircle w$ is an active chain between $v$ and $w$ in $G$. Thus we got a contradiction.
\end{proof}

\begin{lemma}\label{lem5}
	Let $T$ be an $m$-separation tree for  CG $G$ and $C$ a node of $T$. If $u$ and $v$ are two vertices in $C$ that
	are non-adjacent in $G$, then there exists a node $C'$ of $T$ containing $u, v$ and a set $S$ such that $S$ $m$-separates $u$ and $v$ in $G$.
\end{lemma}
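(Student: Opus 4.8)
The plan is to separate the statement into an easy existence part and a hard localization part. Since $u$ and $v$ are non-adjacent in $G$, the global Markov property together with Remark \ref{rem1} guarantees an $m$-separator $S_0\subseteq an(u\cup v)$; moreover, by Lemma \ref{lem3} every chain from $u$ to $v$ that leaves $An(u\cup v)$ is automatically blocked by any subset of $an(u\cup v)$, so only chains lying inside $An(u\cup v)$ need to be controlled. At the outset I would also record that the nodes of $T$ containing both $u$ and $v$ form a connected subtree $T_{uv}$ by the running-intersection property, and that $T_{uv}$ is non-empty because $C\in T_{uv}$. The whole problem is thus to push the (already existing) separator into a single node of $T_{uv}$.

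The engine for the localization is Lemma \ref{lem2}: any chain between $u$ and $v$ that strays into a subtree hanging off a separator $K$ of $T$ is blocked by its intersection $W\cap K$ with that separator, and each such $K$ is contained in a node of $T$. The idea is therefore to assemble a separator out of these $K$-intersections while keeping them inside one node, via an induction along $T_{uv}$: if $C$ itself already contains a subset that $m$-separates $u$ and $v$, take $C'=C$ and finish; otherwise some chain is forced across a separator $K\subseteq C$ into a neighbouring node, and I would argue that $u$ and $v$ must both lie in that neighbour (so we stay inside $T_{uv}$), then recurse, terminating by finiteness of $T$ at a node $C'$ carrying the separator.

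Having located a candidate node $C'$ and candidate set $S\subseteq C'$, I would verify $\langle u,v\mid S\rangle_G$ by a case analysis on an arbitrary chain $\rho$ from $u$ to $v$, with vertex set $W$: case (i) $\rho\not\subseteq An(u\cup v)$ is handled by Lemma \ref{lem3}; case (ii) $\rho$ crossing a separator of $T$ is handled by Lemma \ref{lem2}; and case (iii) $\rho$ confined to $C'$ is handled directly, using Lemma \ref{lem1} to propagate blocking and the minimality of $S$ to ensure no collider of $\rho$ is inadvertently activated.

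The main obstacle I anticipate is not the tree induction but the collider bookkeeping, which stems from the fact that $m$-separation is \emph{non-monotone}: the augmented graph witnessing $u\perp\!\!\!\perp v\mid S$ depends on $S$ itself, and enlarging the conditioning set can open a previously blocked collider chain. Concretely, the naive choice $S=C\cap an(u\cup v)$ fails exactly when there is a collider chain between $u$ and $v$ all of whose colliders lie in $C$, since conditioning on them turns that chain active. The delicate point is therefore to keep $S$ \emph{minimal}, so that every collider chain connecting $u$ and $v$ retains an inactive collider, and then to certify this property after localization using Lemmas \ref{lem1}--\ref{lem3}; Lemma \ref{lem4} can be invoked as a more constructive alternative to guarantee that the terminal separator co-locates with $u$ in a single node, should one prefer to avoid the inductive step.
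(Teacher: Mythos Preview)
Your plan assembles the right ingredients (Lemmas~\ref{lem2}--\ref{lem4}, the ancestral set, the running-intersection subtree $T_{uv}$), but the induction along $T_{uv}$ has a genuine gap. You propose: if no subset of $C$ separates $u$ and $v$, then ``some chain is forced across a separator $K\subseteq C$ into a neighbouring node, and $u,v$ must both lie in that neighbour, then recurse.'' Neither claim is justified. A chain witnessing non-separation inside $C$ need not cross any particular separator of $T$, and even if it does cross $K=C\cap D$, nothing forces both $u$ and $v$ into $D$---the running-intersection property only tells you that vertices common to nodes on both sides of $K$ lie in $K$, not that $u,v$ survive into $D$. More importantly, there is no decreasing measure: moving from $C$ to a neighbour in $T_{uv}$ gives you another node containing $u,v$, but you have no argument that it is ``closer'' to admitting a separator, so finiteness of $T$ alone does not yield termination at a node that works.

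Relatedly, you dismiss the candidate $S=C'\cap an(u\cup v)$ as failing because of collider activation, and propose minimality as the fix. In fact the paper uses exactly this ``naive'' set, but for a carefully chosen $C'$: by Lemma~\ref{lem4} there is a node $C_1$ containing $u$ together with $bd(u)$; if $v\in C_1$ then $S=pa(u)$ works by the local Markov property, and otherwise one takes $C_2$ to be the node of $T_{uv}$ \emph{closest to $C_1$} and sets $S=an(u\cup v)\cap C_2$. The collider problem is then handled not by minimality but by a case analysis on the orientation of the first edge of an arbitrary chain $\rho$ from $u$, using that every parent/neighbour of $u$ not already in $C_2$ is cut off from $v$ by the separator $K$ on the $C_1$--$C_2$ path (so Lemma~\ref{lem2} applies to the tail of $\rho$), and that a first step $u\to x$ is impossible inside $An(u\cup v)$ once $v\notin de(u)$. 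Your Lemma~\ref{lem4} ``alternative'' is thus the main road, not a side route; the missing idea is the specific choice of $C_2$ relative to $C_1$, which is what makes the edge-by-edge case analysis close.
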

\begin{proof}
	Without loss of generality, we can suppose that $v$ is not a descendant of the vertex $u$ in $G$, i.e., $v\not\in nd(u)$. According to the local Markov property for MVR chain graphs proposed by Javidian and Valtorta in \citep{jv1}, we know that $u\perp\!\!\!\perp [nd(u)\setminus bd(u)]|pa_G(u).$ By Lemma \ref{lem4}, there is a
	node $C_1$ of $T$ that contains $u$ and $bd(u)$. If $v\in C_1$, then $S$ defined as the parents of $u$ $m$-separates $u$ from $v$.
	
	If $v\not\in C_1$, choose the node $C_2$ that is the closest node in $T$ to the node $C_1$ and that contains $u$ and $v$. Consider that there is at least one parent (or neighbor) $p$ of $u$ that is not contained
	in $C_2$. Thus there is a separator $K$ connecting $C_2$ toward $C_1$ in $T$ such that $K$ $m$-separates $p$ from all vertices in $C_2\setminus K$. Note that on the chain from $C_1$ to $C_2$ in $T$, all
	separators must contain $u$, otherwise they cannot separate $C_1$ from $C_2$. So, we have $u\in K$ but $v\not\in K$ (if $v\in K$, then $C_2$ is not the closest node of $T$ to the node $C_1$). In fact, for every parent (or neighbor) $p'$ of $u$ that is contained in $C_1$ but not in $C_2$,  $K$ separates $p'$ from all vertices in $C_2\setminus K$, especially the vertex $v$.
	 
	Define $S=(an(u\cup v)\cap C_2)$, which is a subset of $C_2$. We need to show that $u$ and $v$ are $m$-separated by $S$, that is, every chain between $u$
	and $v$ in $G$ is blocked by $S$.
	
	If $\rho$ is not contained in $An(u\cup v)$, then we obtain from Lemma \ref{lem3} that $\rho$ is blocked by $S$.
	
	When $\rho$ is contained in $An(u\cup v)$, let $x$ be adjacent to $u$ on $\rho$, that is, $\rho =
	(u, x, y, \dots , v)$. We consider the three possible orientations of the edge between $u$ and $x$.  We now show that $\rho$ is blocked in all three cases.
	\begin{itemize}
		\item[i:] $u\gets x$, so we know that $x$ is not a collider and we have two possible sub-cases:
			\begin{enumerate}
				\item $x\in C_2$. In this case the chain $\rho$ is blocked at $x$.
				\item $x\not\in C_2$. In this case $K$ $m$-separates $x$ from $v$. By
				Lemma \ref{lem2}, we can obtain that the sub-chain $\rho'$ from $x$ to $v$ can be blocked by $W\cap K$ where $W$ denotes the set of
				all vertices between $x$ and $v$ (not containing $x$ and $v$) on $\rho'$. Since $S\supseteq (W\cap K)$, we obtain from Lemma \ref{lem2} that $S$ also blocks $\rho'$. Hence the chain $\rho$ is blocked by $S$.
			\end{enumerate}
		\item[ii:] $u\to x$. We have the following sub-cases:
			\begin{enumerate}
				\item $x\in an(u)$. This case is impossible because a directed cycle would occur.
				\item $x\in an(v)$. This case is impossible because $v$ cannot be a descendant of $u$.
			\end{enumerate}
		\item[iii:] $u\leftrightarrow x$. We have the following sub-cases:
			\begin{enumerate}
				\item $x\in an(u)$. This case is impossible because a partially directed cycle would occur.
				\item $x\in an(v)$ and $v$ is in the same chain component $\tau$ that contains $u, x$. This is impossible, because in this case we have a partially directed cycle. 
				\item $x\in an(v)$ and $v$ is not in the same chain component $\tau$ that contains $u, x$. We have the following sub-cases:
				\begin{itemize}
					\item $x\not\in C_2$. In this case $K$ $m$-separates $x$ from $v$. By
					Lemma \ref{lem2}, we can obtain that the sub-chain $\rho'$ from $x$ to $v$ can be blocked by $W\cap K$ where $W$ denotes the set of
					all vertices between $x$ and $v$ (not containing $x$ and $v$) on $\rho'$. Since $S\supseteq (W\cap K)$, we obtain from Lemma \ref{lem2} that $S$ also blocks $\rho'$. Hence the chain $\rho$ is blocked by $S$.
					\item $x\in C_2$. We have the three following sub-cases:
					\begin{itemize}
						\item $u\leftrightarrow x\to y$. In this case $x\in S$ blocks the chain. Note that in this case it is possible that $y=v$.
						\item $u\leftrightarrow x\gets y$. So, $y$ ($\ne v$ o.w., a directed cycle would occur) is not a collider. If $y\in C_2$ then the chain $\rho$ is blocked at $y$. Otherwise, we have the two following sub-cases:
						\begin{itemize}
							\item There is a node $C'$ between $C_1$ and $C_2$ that contains $y$ (note that it is possible that $C'=C_1$), so $K$ $m$-separates $y$ from $v$ and the same argument used for case i.2 holds.
							\item In this case $K$ $m$-separates $y$ from $p$ ($p\in bd(u)\cap C_1$ and $p\not\in C_2$), which is impossible because the chain $p\lacircle u\leftrightarrow x\gets y$ is active (note that $u,x\in K$).  
						\end{itemize}
						\item $u\leftrightarrow x\leftrightarrow y$. If there is an outgoing ($\to$) edge from $y$ ($\ne v$ o.w., a partially directed cycle would occur) then the same argument in the previous sub-case ($u\leftrightarrow x\gets y$) holds. Otherwise, $y$ is a collider. If $y\not\in C_2$ then the chain $\rho$ is blocked at $y$. If $y\in C_2$, there must be a non-collider vertex on the chain $\rho$ between $y$ and $v$ to prevent a (partially) directed cycle. The same argument as in the previous sub-case ($u\leftrightarrow x\gets y$) holds.
					\end{itemize}
				\end{itemize} 
			\end{enumerate}
	\end{itemize}  
\end{proof}

\begin{proof}[Proof of Theorem \ref{thm2}]
	From \citep{cdls}, we know that any separator $S$ in junction tree $T$ separates $V_1\setminus S$ and $V_2\setminus S$ in the triangulated graph $\bar{G}_V^t$, where $V_i$ denotes the variable set of the subtree $T_i$ induced by removing the edge with
	a separator $S$ attached, for $i = 1, 2$. Since the edge set of $\bar{G}_V^t$ contains that of undirected independence graph $\bar{G}_V$ for $G$, $V_1\setminus S$ and $V_2\setminus S$ are also separated in $\bar{G}_V$. Since $\bar{G}_V$ is an undirected independence graph for $G$, using  Definition \ref{septree} we obtain that $T$ is an $m$-separation tree for $G$.
\end{proof}

\begin{proof}[Proof of Theorem \ref{thm1}] 
	\noindent ($\Rightarrow$) If condition (i) is the case, nothing remains to prove. Otherwise, Lemma \ref{lem5} implies condition (ii).
	
	\noindent ($\Leftarrow$) Assume that $u$ and $v$ are not contained together in any node $C$ of $T$. Also, assume that $C_1$ and $C_2$ are two nodes of $T$ that contain $u$ and $v$, respectively. Consider that $C_1'$ is the most distant node from $C_1$, between $C_1$ and $C_2$, that contains $u$ and $C_2'$ is the most distant node from $C_2$, between $C_1$ and $C_2$, that contains $v$. Note that it is possible that $C_1'=C_1$ or $C_2'=C_2$. By the condition (i) we know that $C_1'\ne C_2'$. Any separator between $C_1'$ and $C_2'$ satisfies the assumptions of Lemma \ref{lem2}. The sufficiency of condition (i) is given by Lemma \ref{lem2}.
	
	The sufficiency of
	conditions (ii) is trivial by the definition of $m$-separation.
\end{proof}

\section*{Appendix B. Proofs for Correctness of the Algorithms}
\begin{proof}
    [Correctness of Algorithm \ref{hypergraph}] Since an augmented graph for  CG $G$ is an undirected independence graph, by definition of
	an undirected independence graph, it is enough to show that $\bar{G}_V$ defined in step 3 contains all edges of $(G_V)^a$. It
	is obvious that $\bar{E}$ contains all edges obtained by dropping directions of directed edges in $G$ since any set cannot
	$m$-separate two vertices that are adjacent in $G$.
	
	Now we show that $\bar{E}$ also contains any augmented edge that connects vertices $u$ and $v$ having a collider chain between them, that
	is, $(u, v)\in \bar{E}$. Any chain graph yields a directed acyclic graph $D$ of its chain components having $\mathcal{T}$ as a node set and an edge $T_1\to T_2$ whenever there exists in the chain graph $G$ at least one edge $u\rightarrow v$ connecting a node \textit{u} in $T_1$ with a node \textit{v} in $T_2$ \citep{ml2}. So, there is a collider chain between two nodes $u$ and $v$ if and only if there is a chain component $\tau\in \mathcal{T}$ such that
	\begin{enumerate}
		\item $u,v\in \tau$, or
		\item $u\in \tau$ and $v\in pa_G(\tau)$ or vice versa, or
		\item $u, v\in pa_G(\tau)$
	\end{enumerate}
	Since for each connected component $\tau$ there is a $C_h\in C$ containing both $\tau$ and its parent set $pa_G(\tau)$, in all of above mentioned cases we have an $(u,v)$ edge in step 2. Therefore, $\bar{G}_V$ defined in step 3 contains all edges of $(G_V)^a$. 
\end{proof}

\begin{proof}
    [Correctness of Algorithm \ref{alg1}] By the sufficiency of Theorem \ref{thm1}, the initializations at steps 2 and 3 for
	creating edges guarantee that no edge is created between any two variables which are not in the same node of the
	$m$-separation tree. Also, by the sufficiency of Theorem \ref{thm1}, deleting edges at steps 2 and 3 guarantees that any other edge
	between two $m$-separated variables can be deleted in some local skeleton. Thus the global skeleton obtained at step 3 is
	correct. In a maximal ancestral graph, every missing edge corresponds to at least one independency in the corresponding
	independence model \citep{rs}, and MVR CGs are a subclass of maximal ancestral graphs \citep{jv1}. Therefore, according to the necessity of Theorem \ref{thm1}, each augmented edge $(u, v)$ in the undirected independence graph must be deleted at some subgraph over a node of the $m$-separation tree. Furthermore, according to Lemma \ref{lem4}, for every $v$-structure $(u\lacircle w\racircle v)$ there is a node in $m$-separation tree $T$ that contains $u, v$ and $w$, and obviously $w\not\in S_{uv}$. Therefore, we can determine all $v$-structures at step 4, which
	completes our proof. 
\end{proof}


\section*{Acknowledgements}
We are grateful to Professor Jose M.  Pe\~{n}a and Dr. Dag Sonntag for providing us with code that helped in the design of the algorithm that we implemented in R. 




\bibliography{sample}



\end{document}